\documentclass[11pt]{article}
\usepackage[margin=1.1in]{geometry}
\usepackage[utf8]{inputenc}
\usepackage{amsmath}
\usepackage{amssymb}
\usepackage{amsthm}
\usepackage{hyperref}
\usepackage{bm}
\usepackage{tikz}
\usepackage{graphicx}
\usepackage[dvipsnames]{xcolor}
\usepackage{subcaption}
\usepackage{url}
\usepackage{asymptote}
\usepackage{mathtools}
\usepackage{pgfplots}
\pgfplotsset{compat=1.18}
\usepackage{cleveref}
\usepackage{thmtools}

\usepackage[font=small,labelfont=bf]{caption}

\newtheorem{theorem}{Theorem}
\newtheorem{corollary}[theorem]{Corollary}
\newtheorem{proposition}[theorem]{Proposition}
\newtheorem{lemma}[theorem]{Lemma}

\newtheorem{definition}[theorem]{Definition}

\newtheorem{remark}[theorem]{Remark}

\newcommand{\RR}{\mathbb{R}}

\newcommand{\norm}[1]{\left\|#1\right\|}

\DeclareMathOperator{\rank}{rank}
\DeclareMathOperator{\argmin}{argmin}
\DeclareMathOperator{\ReLU}{ReLU}

\usepackage[round]{natbib}
\hypersetup{
    colorlinks,
    linkcolor={red!50!black},
    citecolor={blue!50!black},
    urlcolor={blue!80!black}
}

\title{How Many Features Can a Language Model Store\\Under the Linear Representation Hypothesis?}
\author{Nikhil Garg \quad Jon Kleinberg \quad Kenny Peng}
\date{Cornell University}

\begin{document}

\maketitle

\begin{abstract}
    We introduce a mathematical framework for the linear representation hypothesis (LRH), which asserts that intermediate layers of language models store features linearly. We separate the hypothesis into two claims: linear \textit{representation} (features are linearly embedded in neuron activations) and linear \textit{accessibility} (features can be linearly decoded). We then ask: How many neurons $d$ suffice to both linearly represent and linearly access $m$ features? Classical results in compressed sensing imply that for $k$-sparse inputs, $d = O(k\log (m/k))$ suffices if we allow non-linear decoding algorithms  \citep{candes2006near, candes2006robust, donoho2006compressed}. However, the additional requirement of linear decoding takes the problem out of the classical compressed sensing, into \textit{linear} compressed sensing.

    Our main theoretical result establishes nearly-matching upper and lower bounds for linear compressed sensing. We prove that $d = \Omega_\epsilon(\frac{k^2}{\log k}\log (m/k))$ is required while $d = O_\epsilon(k^2\log m)$ suffices. The lower bound establishes a quantitative gap between classical and linear compressed setting, illustrating how linear accessibility is a meaningfully stronger hypothesis than linear representation alone. The upper bound confirms that neurons can store an exponential number of features under the LRH, giving theoretical evidence for the ``superposition hypothesis'' \citep{elhage2022toy}.

    The upper bound proof uses standard random constructions of matrices with approximately orthogonal columns. The lower bound proof uses rank bounds for near-identity matrices \citep{alon2003problems} together with Tur\'an's theorem (bounding the number of edges in clique-free graphs). We also show how our results do and do not constrain the geometry of feature representations and extend our results to allow decoders with an activation function and bias.
\end{abstract}

\section{Introduction}
Towards understanding how language models work, a single principle has been remarkably useful: the Linear Representation Hypothesis (LRH) states that intermediate layers of language models store features linearly. 

The observance of linear structure dates back to early work on word embeddings \citep{mikolov2013distributed, mikolov2013linguistic, arora2018linear}. \cite{alain2016understanding} demonstrated empirically that in deeper layers of vision models, features (e.g., the presence of cats or dogs) could increasingly be extracted using linear classifiers trained on activations, suggesting that deep neural networks work to arrange features linearly. Since then, work has shown that a wide range of features can be accessed using these ``linear probes,'' such as sentiment and game state \citep{nanda2023emergent, li2025language}. Leveraging the idea that features are stored as directions in activation space, a line of work has demonstrated the ability to steer language model behavior by manipulating activations along certain directions \citep{panickssery2023steering, zou2023representation, turner2023steering, wang2025persona, chen2025persona}. Moreover, the assumption of an underlying linear structure has motivated the use of sparse autoencoders to automatically extract features from language models. Strikingly, these features appear to be highly-interpretable while capturing much of the original information \citep{cunningham2023sparse, gao2024scaling, movva2025sparse}. These linearly-represented features have formed the foundation for interpretable and causal explanations for how language models perform certain computations, like writing poems or answering basic questions \citep{lindsey2025biology}.

Given the tremendous amount of activity that revolves around the LRH, recent efforts have aimed to give a formal account of the hypothesis and to derive and test implications \citep{elhage2022toy, park2023linear, costa2025flat}. In particular, it has been hypothesized that linear representation allows for \textit{superposition}, where $d$ neurons can store $m\gg d$ features \citep{elhage2022toy}. Given the vast number of relevant concepts needed to generate text, such a capability appears to be necessary given the performance of language models. While results such as the Johnson-Lindenstrauss lemma \citep{johnson1984extensions} and from compressed sensing \citep{candes2006near, candes2006robust, donoho2006compressed} are suggestive, and empirical evidence exists in toy settings \citep{elhage2022toy}, current theoretical accounts do not clearly establish bounds on the amount of possibe superposition under the LRH.

In the present work, we aim to give a mathematical framework capable of addressing the following fundamental question (stated informally here):
\begin{quote}
    (*) Under the linear representation hypothesis, how many features can one layer of $d$ neurons store?
\end{quote}
More concretely, an intermediate layer of a language model embeds an input as a $d$-dimensional vector (the activations of the $d$ neurons). Then how many features can this vector store, and under what conditions?

The present work illustrates one reason why there is a lack of clarity surrounding this basic question: different formulations of the LRH imply different answers to the question. In particular, we show a quantitative gap in the answer to (*) depending on if the LRH states that features are \textit{represented} linearly or are also \textit{accessible} linearly. These statements have often been treated as equivalent: for example, \cite{elhage2022toy} write that
\begin{quote}
    Linear representations make features ``linearly accessible.'' A typical neural network layer is a linear function followed by a non-linearity. If a feature in the previous layer is represented linearly, a neuron in the next layer can ``select it'' and have it consistently excite or inhibit that neuron. If a feature were represented non-linearly, the model would not be able to do this in a single step.
\end{quote}
Both linear representation and accessibility play an important role in work surrounding the LRH. Linear representation implies that model behavior can be altered by modifying activation space along linear directions. On the other hand, linear accessibility implies that features can be extracted using linear probes, and as \cite{elhage2022toy} note above, accessibility allows the next layer of a model to efficiently use a feature. Furthermore, both linear representation and accessibility are implicit in the design of sparse autoencoders used in practice, which utilize both linear encoding and decoding.

One key implication of the present work is that there is a meaningful way in which linear representation does not imply linear accessibility. We now describe our mathematical framework and results, illustrating this conceptual difference, and then showing how they lead to a quantitative difference.

\subsection{Mathematical Framework}
In this section, we introduce a mathematical framework that allows us to formally study implications of the linear representation hypothesis. In particular, the framework allows us to cleanly understand the difference between (linear) representation and (linear) accessibility, while also allowing us to formally quantify what it means for superposition to occur (for neuron activations to store more features than there are neurons).

The theoretical results, stated in \Cref{sec:results}, are self-contained. However, the framework here provides the more general context establishing the relationship between our mathematical results and the emerging scientific theory of language models.

\paragraph{Activations and features.} We will assume that inputs comes from some set $L$, which we call a \textbf{language} (i.e., sequences of tokens). We let $f:L\rightarrow \RR^d$ give the \textbf{activations} of $d$ neurons in an intermediate layer. We let a \textbf{feature} be a function $z_i:L\rightarrow \RR$. For example, a feature can capture if the text is a question, if it has positive sentiment, if it is about dogs, or if it should be continued with a number. Importantly, we think of the value of a feature $z_i(\ell)$ as depending only on the input text; in other words, features can exist independently of any language model.

\paragraph{Feature accessibility.} A feature $z_i$ is \textbf{$(\epsilon, S)$-recovered} by a \textbf{probe} $g_i:\RR^d\rightarrow \RR$ if
\begin{equation}
    |g_i(f(\ell)) - z_i(\ell)| < \epsilon
\end{equation}
for all $\ell \in S.$ Intuitively, this means that the value of a feature can be accessed from a text's activations $f(\ell)$, at least up to some approximation error.

\paragraph{The Linear Representation Hypothesis.} We now formalize how the LRH places further restrictions on the above framework.

\textbf{Linear representation} for a set of features $z_1,\cdots,z_m:L\rightarrow \RR$ implies that there exists a set of \textbf{representation vectors} $a_1,\cdots,a_m\in \RR^d$, such that
\begin{equation}
    f(\ell) = \sum_{i=1}^m z_i(\ell)a_i.
\end{equation}
Equivalently, there exists $A\in \RR^{d\times m}$ such that
\begin{equation}
    f(\ell) = Az(\ell),
\end{equation}
where $z(\ell)=[z_1(\ell),z_2(\ell),\cdots,z_m(\ell)]\in \RR^m.$ We refer to $z(\ell)$ as a text $\ell$'s \textbf{feature representation}. Under linear representation, activations are a function only of an input's feature representation.

\textbf{Linear accessibility} places a restriction on the probes $g_i$ (in contrast to linear representation, which restricts $f$). Formally, under linear accessibility, we require the probe $g_i$ to satisfy $g_i(x) = \langle b_i, x\rangle$ for some \textbf{probe vector} (or \textbf{linear probe}) $b_i\in \RR^d$. Therefore, a feature $z_i$ is \textbf{$(\epsilon,S)$-linearly recovered} by a linear probe $b_i$ if
\begin{equation}
    |\langle b_i, f(\ell)\rangle - z_i(\ell)| < \epsilon
\end{equation}
for all $\ell \in S.$ Then if all features $z_1,\cdots,z_m$ can be $(\epsilon,S)$-linearly recovered, this implies that there exists $B\in \RR^{d\times m}$ such that
\begin{equation}
    \norm{B^T f(\ell) - z(\ell)}_\infty < \epsilon
\end{equation}
for all $\ell\in S.$

\paragraph{Superposition.} We can now see how this framework allows us to formally reason about (*), or about how much superposition can occur under the LRH. In particular, we are interested in understanding how many features $m$ can be stored using activations in $\RR^d$ (of $d$ neurons) under the hypotheses of linear representation and linear recovery.

First note that under the linear representation, activations of an input text are a function only of its feature representation. Therefore, we can redefine activations to be given by $f:\RR^m\rightarrow \RR^d$. Under this definition, we restate linear representation to say that
\begin{equation}
    f(z) = Az
\end{equation}
for $z\in \RR^m, A\in \RR^{d\times m}.$

Now, assuming linear representation, we can ask two questions:
\begin{enumerate}
    \item[Q1.] (General accessibility.) For a fixed $m$, how many dimensions $d$ are required for there to exist $A\in \RR^{d\times m}$ and $g:\RR^d\rightarrow \RR^m$ such that
    \begin{equation}
        \norm{g(Az) - z}_\infty < \epsilon
    \end{equation}
    for all $z\in S.$
    \item[Q2.] (Linear accessibility.) For a fixed $m$, how many dimensions $d$ are required for there to exist $A,B\in \RR^{d\times m}$ such that
    \begin{equation}
        \norm{B^TAz - z}_\infty < \epsilon
    \end{equation}
    for all $z\in S.$
\end{enumerate}
In both settings, we will generally assume $S$ to be the set of $k$-sparse vectors (i.e., having at most $k$ non-zero entries), and answer in terms of $m,k,\epsilon$. In other words, we study settings where only a small number of features tend to be ``active'' for any particular input (intuitively, this is matches our intuition about language---that any given piece of text only expresses a small fraction of possible concepts).

A key result we show is that there is a quantitative gap in the answer to (1) and (2). In other words, there is a meaningful sense in which linear representation does not imply linear accessibility. In both settings, superposition can occur: an exponential number of features can be represented and accessed. However, we will show that the amount of tolerated sparsity differs significantly.

\subsection{Results}\label{sec:results}
We now state theoretical results that resolve the questions Q1 and Q2 posed above. First, we will show how classical results in compressed sensing resolve the general accessibility setting (\Cref{thm:compressed-sensing}). Then, we will show that the additional restriction to \textit{linear} probes brings us into the different setting of \textit{linear} compressed setting. The primary contributions of the present work are nearly-matching upper and lower bounds in the linear setting (\Cref{thm:upper} and \Cref{thm:lower}).

\paragraph{Past Results: Compressed Sensing.} 
Let us recall a classical result in compressed sensing \citep{candes2006near, candes2006robust, donoho2006compressed}, which exactly deals with the setting in which features are represented linearly but accessed non-linearly, answering Q1.
\begin{theorem}[Compressed Sensing]\label{thm:compressed-sensing}
    There exists a matrix $A\in \RR^{d\times m}$ with $d = O\left(k\log \frac{m}{k}\right)$
    such that for all $k$-sparse $z\in \RR^m$,
    \begin{equation}
        g(x) := \argmin_{z'\in \RR^m: x=Az'} \norm{z'}_1
    \end{equation}
    satisfies
    \begin{equation}
        g(Az) = z.
    \end{equation}
\end{theorem}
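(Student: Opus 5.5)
The plan is the standard two-step argument: exact recovery by $\ell_1$ minimization follows from a deterministic condition on $A$, namely the null space property, and that condition holds for a suitably scaled random Gaussian matrix with $d = O(k\log(m/k))$ rows. Concretely, $A$ has the \emph{null space property of order $k$} if for every nonzero $h\in\ker A$ and every index set $T$ with $|T|\le k$,
\begin{equation}
\norm{h_T}_1 < \norm{h_{T^c}}_1 .
\end{equation}

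\textbf{Step 1 (null space property implies recovery).} Let $z$ be $k$-sparse with support $T$, and let $z'$ be any other point with $Az'=Az$. Then $h=z'-z$ is a nonzero vector in $\ker A$. Since $z_{T^c}=0$ and $z'_{T^c}=h_{T^c}$, the triangle inequality gives
\begin{equation}
\norm{z'}_1 = \norm{z_T+h_T}_1+\norm{h_{T^c}}_1 \ge \norm{z}_1-\norm{h_T}_1+\norm{h_{T^c}}_1 > \norm{z}_1 .
\end{equation}
So $z$ is the unique minimizer, and $g(Az)=z$.

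\textbf{Step 2 (RIP implies the null space property).} Say $A$ satisfies the restricted isometry property of order $2k$ with constant $\delta_{2k}<\sqrt2-1$: for all $2k$-sparse $v$,
\begin{equation}
(1-\delta_{2k})\norm{v}_2^2\le\norm{Av}_2^2\le(1+\delta_{2k})\norm{v}_2^2 .
\end{equation}
Take $h\in\ker A$ and sort the coordinates of $h_{T^c}$ by decreasing magnitude into consecutive blocks $T_1,T_2,\dots$ of size $k$. Let $T_0$ be the $k$ largest-magnitude coordinates of $h$; handling $T_0$ suffices, since it dominates every $T$ with $|T|\le k$.
\begin{itemize}
\item Each later block is controlled by the previous one: $\norm{h_{T_{j+1}}}_2\le k^{-1/2}\norm{h_{T_j}}_1$. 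Summing over $j\ge 2$ bounds the tail in $\ell_2$ by $k^{-1/2}\norm{h_{T_0^c}}_1$.
\item From $Ah=0$, write $A(h_{T_0}+h_{T_1}) = -\sum_{j\ge2}Ah_{T_j}$. Bound $\langle A h_{T_0\cup T_1}, Ah_{T_j}\rangle$ using the RIP polarization bound $|\langle Au,Av\rangle|\le\delta_{2k}\norm u_2\norm v_2$ for disjointly supported $k$-sparse $u,v$.
\item This yields $\norm{h_{T_0}}_1\le\sqrt k\,\norm{h_{T_0}}_2\le\rho\norm{h_{T_0^c}}_1$ with $\rho=\sqrt2\delta_{2k}/(1-\delta_{2k})<1$.
\end{itemize}
This is Cand\`es' argument and is routine bookkeeping.

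\textbf{Step 3 (random matrices satisfy RIP).} Let $A$ have i.i.d.\ $N(0,1/d)$ entries. For a fixed unit vector $v$, $\norm{Av}_2^2$ concentrates: $\Pr\bigl[|\norm{Av}_2^2-1|>t\bigr]\le 2e^{-c t^2 d}$. For a fixed support $S$ of size $2k$, take a $1/4$-net of the unit sphere in $\RR^S$ of size $9^{2k}$ and union bound over it. The standard net-extension argument then upgrades the bound from net points to all unit vectors on $S$, with isometry constant $\delta$. Finally union bound over the $\binom{m}{2k}\le (em/2k)^{2k}$ supports. The total failure probability is
\begin{equation}
\exp\bigl(2k\log(em/2k)+2k\log 9-c\delta^2 d\bigr),
\end{equation}
which is less than $1$ once $d\ge C k\log(m/k)$ with $\delta$ fixed below $\sqrt2-1$. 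Hence such an $A$ exists.

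\textbf{Main obstacle.} Step 3 is where the $\log(m/k)$ rather than $\log m$ dependence has to be tracked: the entropy of the supports must be balanced against the Gaussian concentration exponent. Step 2's block decomposition is the other delicate point; I would follow Cand\`es' 2008 proof for it. Since the statement is classical, one could also simply cite \citep{candes2006near, donoho2006compressed}. The plan above is the self-contained route.
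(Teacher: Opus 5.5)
The paper gives no proof of this theorem. It quotes it as a classical result and points to the original compressed-sensing papers. Your proposal is a correct, self-contained version of the now-standard proof that descends from that literature:
\begin{itemize}
\item the null space property gives exact, unique $\ell_1$ recovery;
\item RIP of order $2k$ with $\delta_{2k}<\sqrt2-1$ gives the null space property, via Cand\`es' block decomposition;
\item Gaussian matrices satisfy RIP by concentration on a net of each $2k$-dimensional coordinate sphere, plus a union bound over the $\binom{m}{2k}$ supports.
\end{itemize}

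Compared with the bare citation, your route makes two things explicit that matter in this paper. First, it yields the \emph{uniform} guarantee the statement actually asserts: a single $A$ that works for every $k$-sparse $z$ simultaneously, which a fixed-signal, with-high-probability recovery result would not give. Second, it shows where the linear-in-$k$ dependence comes from. $\ell_1$ decoding only needs a \emph{constant} isometry constant on $2k$-sparse supports, and the $k\log(m/k)$ is essentially $\log\binom{m}{2k}$, the entropy of the supports. By contrast, linear decoding forces the interference terms $\langle b_i,a_j\rangle$ down to order $\epsilon/k$, which is the source of the paper's extra factor of $k$.

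A few details to tighten:
\begin{itemize}
\item In Step 2 the blocks $T_1,T_2,\dots$ should partition $T_0^c$, not $T^c$.
\item Since $h_{T_0\cup T_1}$ is $2k$-sparse, apply the polarization bound to $h_{T_0}$ and $h_{T_1}$ separately, then recombine with $\norm{h_{T_0}}_2+\norm{h_{T_1}}_2\le\sqrt2\,\norm{h_{T_0\cup T_1}}_2$; this is where the $\sqrt2$ in $\rho$ comes from.
\item Absorbing the $2k(1+\log 9)$ term into $Ck\log(m/k)$ requires $m/k$ bounded away from $1$ (say $m\ge 2k$). This assumption is already implicit in the statement, since $k\log(m/k)\to 0$ as $k\to m$.
\end{itemize}
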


This shows that when 
\begin{equation}
    d = O\left(k\log \frac{m}{k}\right),
\end{equation}
$k$-sparse $z$ in $\RR^m$ can be linearly embedded in $\RR^d$ in a way such that $z$ can be \textit{exactly} recovered from the embedding using $\ell_1$-minimization (basis pursuit). However, basis pursuit is non-linear. We are interested in the case when features can be extracted linearly, which we discuss next.

\paragraph{Our Results: Linear Compressed Sensing.} The requirement that the recovery function be linear takes us out of classical compressed sensing settings, which generally allows arbitrary (efficient) algorithms to perform recovery. The assumption of linear recovery (the setting of Q2) brings us into the regime of what could be called \textit{linear compressed sensing}.

Let $d(m,k,\epsilon)$ be the smallest choice of $d$ such that there exists $A,B\in \RR^{d\times m}$ such that
\begin{equation}
    \norm{B^TAz - z}_\infty < \epsilon
\end{equation}
for all $k$-sparse $z\in [-1,1]^m$. (Here, any bounded interval would suffice.)

Our main results provide nearly-matching upper and lower bounds on $d(m,k,\epsilon).$

\begin{theorem}[Upper Bound]\label{thm:upper}
    \begin{equation}
        d(m,k,\epsilon) = O_\epsilon\left(k^2\log m\right).
    \end{equation}
\end{theorem}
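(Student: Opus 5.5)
We take $B = A$, with the columns $a_1,\dots,a_m$ of $A$ unit vectors that are pairwise nearly orthogonal: $|\langle a_i,a_j\rangle|\le \mu$ for all $i\neq j$, where $\mu := \epsilon/(2k)$. Given such an $A$, fix a $k$-sparse $z\in[-1,1]^m$ with support $T$, $|T|\le k$. For each coordinate $i$,
\[
(A^TAz)_i - z_i = \sum_{j\in T,\, j\neq i} \langle a_i,a_j\rangle z_j ,
\]
since $\langle a_i,a_i\rangle=1$. This sum has at most $k$ terms, each of absolute value at most $\mu$. Hence $\norm{A^TAz-z}_\infty \le k\mu = \epsilon/2 < \epsilon$. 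The whole problem therefore reduces to constructing $m$ unit vectors in $\RR^d$ with coherence at most $\epsilon/(2k)$, using $d = O(\mu^{-2}\log m) = O(k^2\epsilon^{-2}\log m)$.

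For the construction we use the standard random one. Let the entries of $A$ be i.i.d.\ uniform random signs scaled by $1/\sqrt d$, so that each column has norm exactly $1$. For $i\neq j$, $\langle a_i,a_j\rangle$ is an average of $d$ independent Rademacher variables. Hoeffding's inequality gives
\[
\Pr\bigl[|\langle a_i,a_j\rangle|>\mu\bigr]\le 2e^{-d\mu^2/2}.
\]
A union bound over the fewer than $m^2/2$ pairs shows that the failure probability is below $m^2e^{-d\mu^2/2}$. This is less than $1$ once $d > 4\mu^{-2}\ln m = 16k^2\epsilon^{-2}\ln m$. So for such $d$ some sign matrix $A$ has coherence at most $\mu$, and taking $B=A$ gives $d(m,k,\epsilon)\le \lceil 16k^2\epsilon^{-2}\ln m\rceil + 1 = O_\epsilon(k^2\log m)$. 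Gaussian columns followed by normalization would also work, but they need an extra concentration step for the norms; the sign construction avoids this.

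The only step that needs care is the concentration-plus-union-bound estimate, and it is routine. The main point is to check the bookkeeping: the error bound $k\mu$ uses only the entrywise bound $|z_j|\le 1$ and the support size, so no stronger property such as RIP is needed. One also has to handle the degenerate small cases ($m=1$, or $k\ge m$), where one simply takes $d=m$ and $A=B=I$; these fit inside the big-$O$ bound.
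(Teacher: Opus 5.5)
Your proposal is correct and follows the paper's proof essentially step for step. Both take $B=A$ with a random scaled-Rademacher matrix, use Hoeffding plus a union bound over pairs to get unit columns with coherence at most $\mu$ once $d=O(\mu^{-2}\log m)$, and bound the error by $\|A^TAz-z\|_\infty\le k\mu$. The only cosmetic difference is that you take $\mu=\epsilon/(2k)$ with non-strict inequalities, whereas the paper takes $\mu=\epsilon/k$ with strict ones.
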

This upper bound demonstrates that embeddings can in fact store an exponential number of features relative to embedding dimension.

\begin{theorem}[Lower Bound]\label{thm:lower}
For $\epsilon > \frac{k^{3/2}\sqrt{5}}{\sqrt{m}},$
    \begin{equation}
        d(m,k,\epsilon) = \Omega_\epsilon\left(\frac{k^2}{\log k}\log \frac{m}{k}\right).
    \end{equation}
\end{theorem}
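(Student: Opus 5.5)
The plan is to reduce to a rank bound for a near-identity matrix. Let $A,B\in\RR^{d\times m}$ witness $d=d(m,k,\epsilon)$, and set $M=B^TA\in\RR^{m\times m}$. Then $\rank M\le d$, so it suffices to show that $M$ contains a large principal submatrix that is close to the identity, and to lower-bound its rank using the result of \citet{alon2003problems}. I would use it in the following form: if an $n\times n$ matrix has unit diagonal and off-diagonal entries of absolute value at most $\delta$, with $1/\sqrt n\le\delta<1/2$, then its rank is at least $c\,\frac{\log n}{\delta^2\log(1/\delta)}$ for an absolute constant $c>0$.

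\emph{Step 1: entrywise constraints on $M$.} Take $z=e_i$, which is $k$-sparse. The $i$-th coordinate of $Mz-z$ is $M_{ii}-1$, so $|M_{ii}-1|<\epsilon$. Now fix a row $i$ and any set $T\subseteq[m]\setminus\{i\}$ with $|T|=k$. Take $z$ supported on $T$ with $z_j=\operatorname{sign}(M_{ij})$. Then $z_i=0$ and the $i$-th coordinate of $Mz-z$ equals $\sum_{j\in T}|M_{ij}|$, so this sum is less than $\epsilon$. Consequently each row of $M$ has fewer than $k$ off-diagonal entries with $|M_{ij}|\ge\epsilon/k$.

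\emph{Step 2: removing the large entries via Tur\'an.} Define a graph $G$ on $[m]$ with an edge $\{i,j\}$ whenever $|M_{ij}|\ge\epsilon/k$ or $|M_{ji}|\ge\epsilon/k$. By Step 1, $G$ has at most $m(k-1)$ edges, so its average degree is at most $2(k-1)$. Tur\'an's theorem, in its complementary form (equivalently Caro--Wei), then gives an independent set $I$ of size $n\ge m/(2k-1)\ge m/(2k)$.

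On the principal submatrix $M_I$, every off-diagonal entry is less than $\epsilon/k$ in absolute value, and every diagonal entry lies in $(1-\epsilon,1+\epsilon)$. Divide each row by its diagonal entry; this preserves rank. The result is a unit-diagonal matrix with off-diagonal entries bounded by $\delta:=\epsilon/(k(1-\epsilon))$, which is at most $2\epsilon/k$ for, say, $\epsilon\le1/2$.

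\emph{Step 3: apply Alon's bound.} Alon's theorem requires $\delta\ge 1/\sqrt n$. Since $n\ge m/(2k)$, it suffices to have $\epsilon/k\gtrsim\sqrt{2k/m}$, that is, $\epsilon\gtrsim k^{3/2}/\sqrt m$. This is exactly what the hypothesis $\epsilon>k^{3/2}\sqrt5/\sqrt m$ is meant to guarantee. If $\delta$ exceeds the upper limit in Alon's theorem, I would simply enlarge the bound $\delta$ to a fixed value in range, which is harmless for fixed $\epsilon$. Combining,
\begin{equation}
d\ \ge\ \rank M_I\ \ge\ c\,\frac{\log(m/(2k))}{\delta^2\log(1/\delta)}\ =\ \Omega_\epsilon\!\left(\frac{k^2}{\log k}\log\frac mk\right),
\end{equation}
using $\delta=\Theta_\epsilon(1/k)$, so that $\log(1/\delta)=\Theta_\epsilon(\log k)$.

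The main obstacle is Step 2. Alon's bound needs a uniform bound of order $\epsilon/k$ on all off-diagonal entries, but $M$ may legitimately contain up to $k-1$ large entries per row. The graph/independent-set argument handles this at the cost of a factor $k$ in $n$, and that loss is what forces the condition relating $\epsilon$, $k$ and $m$. The remaining care is bookkeeping of constants: making the $\sqrt5$ in the hypothesis match the losses from the factor $2k$ in $n$ and from the row normalization.
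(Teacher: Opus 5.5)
Your argument is correct and uses the same ingredients as the paper (Alon's rank bound on a row-normalized principal submatrix of $B^TA$, plus Tur\'an's theorem on the graph of interfering pairs). You run it directly: few large entries per row, hence a large independent set, hence one near-identity submatrix. The paper runs the contrapositive: every $r$-subset contains an edge, hence many edges, hence a row with $2k$ large entries. Your choice $z_j=\operatorname{sign}(M_{ij})$ also replaces the paper's sign-pigeonhole step, improving $m/(4k+1)$ to $m/(2k-1)$.

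One correction concerns your fallback for $\delta=\epsilon/(k(1-\epsilon))\ge 1/2$. You cannot ``enlarge'' $\delta$ into Alon's range, because that range is bounded above. That regime ($\epsilon\ge 1/3$ and $k\le 2\epsilon/(1-\epsilon)$) must simply be excluded, as the paper also tacitly does. The exclusion is genuinely necessary: for $k=1$ and $\epsilon>1/2$, the choice $d=1$, $a_i=b_i=1/\sqrt{2}$ already works for every $m$.
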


(Here, the restriction $\epsilon > \frac{k^{3/2}\sqrt{5}}{\sqrt{m}}$ is not very consequential, since we are generally interested in the regime where $k = O(\log m).$)

The lower bound establishes a quantitative gap between compressed sensing and linear compressed sensing. In particular, while $d$ must be essentially quadratic in $k$ in linear compressed sensing, it need only be linear in $k$ in classical compressed sensing.

\paragraph{Further results.} There has also been general interest in the ``geometry'' of features stored in language models \citep{park2023linear}. For example, it is sometimes stated that (unrelated) features are represented as \textit{orthogonal} directions under the LRH \citep{hindupur2025projecting}. In \Cref{sec:feature-geometry}, we show what our theoretical results imply (and do not imply) about feature geometry. In particular, \Cref{prop:geometry-1} shows that the lower bound in \Cref{thm:lower} can be achieved even for non-intuitive arrangements of feature directions: features can be represented and probed by approximately orthogonal directions, while different features can be represented by highly-correlated directions and also probed by highly-correlated directions. These results  highlight the distinction between representation and probe vectors (also addressing uncertainty about whether encoder and decoder weights in sparse autoencoders correspond to a feature's ``true'' direction), and how the major requirement for linear feature representation and accessibility is that a feature's representation vector must be approximately orthogonal to other features' probe vectors (not other features' representation vectors). In \Cref{prop:geometry-2}, we then show how bounds on the magnitude of feature vectors limit the amount of ``unusualness'' in feature geometry.

In \Cref{sec:binary}, we then show that the lower bound in \Cref{thm:lower} also applies to more general settings. In particular, in \Cref{thm:threshold} we show that the bound applies in a setting where features are binary and the task is to linearly classify inputs according to each feature. As a corollary, the result implies that adding a bias and non-linear activation function to a linear probe does not significantly increase the number of features a model can store.

\subsection{Intuition and Proof Techniques}
We begin by developing some geometric intuition for the problem. Let $a_1,\cdots,a_m\in \RR^d$ be the columns of $A\in \RR^{d\times m}$ and $b_1,\cdots,b_m\in \RR^d$ be the columns of $B\in \RR^{d\times m}$. For each $i\in [m]$, we recall that $a_i$ is feature $i$'s \textbf{representation vector} and $b_i$ is feature $i$'s \textbf{probe vector}. Now let $\hat{z} = B^TAz.$ We are interested in when
\begin{equation}\label{eq:salmon}
    \norm{\hat{z}-z}_\infty < \epsilon \iff |\hat{z}_i - z_i| < \epsilon,\quad \forall i\in [m].
\end{equation}
Note that
\begin{equation}
    \hat{z}_i = \langle b_i, Az\rangle = \sum_{j=1}^m z_j \langle b_i, a_j\rangle.
\end{equation}
Therefore, the value $\langle b_i, a_j\rangle$ tells us how much the probe $b_i$ for the $i$-th feature responds to presence of the $j$-th feature. If $|\langle b_i,a_j\rangle|$ is large for $i\neq j$, that means that a text that contains feature $j$ will contribute to the measurement of feature $i$. In this case, we say that feature $j$ interferes with the measurement of feature $i$. If $\langle b_i,a_j \rangle > 0$, this is positive interference (presence of feature $j$ will lead to inflated measurement of feature $i$). If $\langle b_i,a_j \rangle < 0$, this is negative interference (presence of feature $j$ will lead to deflated measurement of feature $i$).

Roughly speaking, for \eqref{eq:salmon} to hold, there must be low interference; equivalently, $B^TA$ must be small off-diagonal. A general step in our analysis is to consider how $k$-sparsity limits the amount by which interference across multiple features can compound.

\paragraph{Upper bound sketch.} We can give a fairly straightforward construction to establish the upper bound. In particular, we can give a construction where $A=B$. Then it suffices to find a matrix $A$ such that $\langle a_i, a_i\rangle = 1$ and such that $\langle a_i, a_j\rangle < \frac{\epsilon}{k}.$ Leveraging the $k$-sparsity of $z$, this would show that the total interference for any feature is bounded by $k \cdot \frac{\epsilon}{k} = \epsilon$. We can then use standard constructions of incoherent matrices (i.e., matrices with approximately-orthonormal columns), such as appropriately-scaled random Gaussian matrices, to obtain the desired bound. 

\paragraph{Initial attempt at lower bound.}
To prove the lower bound, a natural first inclination is to show that among a sufficiently large set of directions in $\RR^d$, there must exist correlated directions. In particular, it is possible to show that for $d$ small relative to $m$, there exists $i\in [m]$ and $k$ choices of $j$ such that $|\langle a_i,a_j\rangle| > \frac{\epsilon}{k}$ and such that $\langle a_i,a_j\rangle$ have the same sign (and thus compound in interference). However, this approach does not generally work when $A\neq B,$ since we do not strictly require representation directions of different features (or probe directions of different features) to be orthogonal. For $i\neq j$, it is only necessary for the inner product of feature $i$'s representation vector to be orthogonal to feature $j$'s \textit{probe} vector. This means that there may exist ``good'' sets of representation directions and probe directions among which many are pairwise correlated.

In fact, in \Cref{sec:feature-geometry}, we show that there are constructions with $B\neq A$ where probe directions are highly correlated with each other and representation directions are also highly correlated with each other. So while showing the existence of correlated vectors gives a lower bound in the case $A=B$, it does not suffice in general.

\paragraph{Lower bound sketch.} 
It would suffice to show that if $\langle b_i, a_i\rangle$ is near $1$ (as is required if $B^TAe_i \approx e_i$) and $d$ is small, then there must exist many pairs $i\neq j$ such that $|\langle b_i, a_j \rangle|$ is large. In particular, we would like to find a feature $i$ such that there are $k$ choices of $j$ such that $|\langle b_i, a_j \rangle| > \frac{\epsilon}{k}$ and such that the interference is all in the same direction. To do this, it suffices to find $2k$ such vectors $a_j$ with $|\langle b_i, a_j \rangle| > \frac{\epsilon}{k}$ (by the pigeonhole principle).

We employ a few ideas. The first is to use a result of \cite{alon2003problems}, which demonstrates that low-rank matrices that are $1$ on the diagonal must have a large off-diagonal entry. (This result also notably provides a near-optimal lower bound on the Johnson-Lindenstrauss Lemma.) In particular, we know that $B^TA$ has rank at most $d$, and we can minimally scale rows to allow diagonal entries to be $1$. Note, however, that applying the result to $B^TA$ itself does not suffice since this only guarantees the existence of a single interfering pair of features.

Therefore, the second main idea is to apply the result instead to large principal submatrices of $B^TA$, which demonstrates that any submatrix of $B^TA$ must have a large off-diagonal entry. Equivalently, among any large subset $R\subseteq [m]$, there must exist a pair $i,j\in R$ with $i\neq j$ such that $\langle b_i, a_j \rangle$ is large. 

Finally, we consider a graph reduction and apply Tur\'an's theorem. Consider a graph $G$ with vertices $[m]$ and such that there is an edge between $i$ and $j$ if and only if $|\langle b_i, a_j\rangle| > \frac{\epsilon}{k}$ or $|\langle b_j, a_i\rangle| > \frac{\epsilon}{k}$. Then $G$ has no independent set of size $r$. Then we can apply Tur\'an's theorem (which gives an upper bound on the number of edges in a graph without an $r$-clique) on $\overline{G}$, hence providing a lower bound on the number of edges in $G$, which in turns gives a lower bound on the number of pairs of interfering features. Then we can use the pigeonhole principle to show that there must exist some probe vector that has large interference with \textit{many} representation vectors, as desired.

\begin{figure}
\centering
\begin{tikzpicture}[
    >=stealth,
    scale=2.5,
    vector/.style={->, very thick},
    axis/.style={very thin, gray!60},
    grid/.style={thin, gray!20}
]

\draw[grid] (-0.1,-0.8) grid (1.2,1.2);

\draw[axis,->] (-0.1,0) -- (1.2,0) node[right] {};
\draw[axis,->] (0,-0.8) -- (0,1.3) node[above] {};

\draw[vector,Bittersweet] (0,0) -- (0.5,0.866) 
   node[above right] {$a_{\texttt{cat}} = \left(\frac{1}{2}, \frac{\sqrt{3}}{2}\right)$};

\draw[vector,Bittersweet] (0,0) -- (1,0) 
   node[below right] {$a_{\texttt{happy}} = (1,0)$};

\draw[vector,ForestGreen] (0,0) -- (0,1.1547) 
   node[above left] {$b_{\texttt{cat}} = \left(0, \frac{2}{\sqrt{3}}\right)$};

\draw[vector,ForestGreen] (0,0) -- (1,-0.577) 
   node[below right] {$b_{\texttt{happy}} = \left(1, -\frac{1}{\sqrt{3}}\right)$};

\end{tikzpicture}
\caption{Representation vectors $a_{\texttt{cat}}, a_{\texttt{happy}}$ and probe vectors $b_{\texttt{cat}}, b_{\texttt{happy}}$ that yield perfect linear recovery of the \texttt{cat} and \texttt{happy} features. Notice that while the representation vectors are not orthogonal, the probe vectors are able perfectly extract features since the \texttt{cat} probe is orthogonal to the \texttt{happy} representation and the \texttt{happy} probe is orthogonal to the \texttt{cat} representation. Indeed, observe that $b_{\texttt{cat}}(z_{\texttt{cat}}a_{\texttt{cat}} + z_{\texttt{happy}}a_{\texttt{happy}}) = z_{\texttt{cat}}\langle b_{\texttt{cat}}, a_{\texttt{cat}}\rangle + z_{\texttt{happy}}\langle b_{\texttt{cat}}, a_{\texttt{happy}}\rangle = z_{\texttt{cat}}\cdot 1 + z_{\texttt{happy}}\cdot 0.$}
\end{figure}
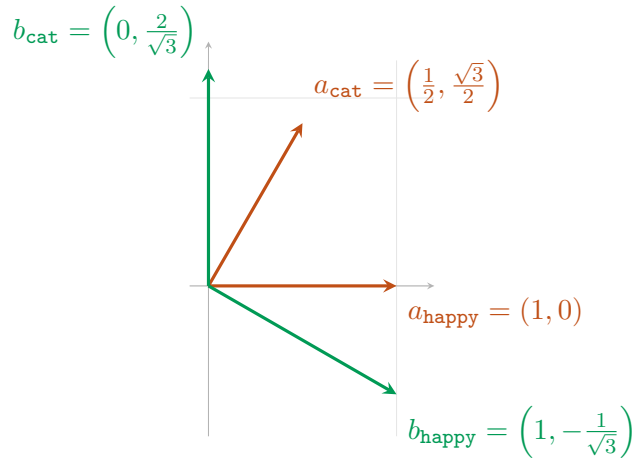

\subsection{Contributions: Implications for the Linear Representation Hypothesis and the Theory of Language Models}

While linear representations have long been a central character in the empirical study of language models \citep{mikolov2013distributed,alain2016understanding}, it has been unclear whether the LRH can provide a theoretical basis by which to study language models. Here, we explore this possibility by providing mathematical foundations for the linear representation hypothesis. The framework we introduce provides definitional clarity on the LRH, cleanly distinguishing the distinct assumptions of linear representation and linear accessibility, and allowing us to formally consider the amount of superposition (i.e., the number of features stored relative to the number of neurons) possible under the LRH.

The framework allows us to then derive theoretical implications of the linear representation hypothesis. Mathematically, we establish a quantitative gap between compressed sensing and \textit{linear} compressed sensing. Our main results demonstrate dependencies between superposition (the number of features stored relative to the number of neurons), sparsity (the number of features active per input), accessibility (the precision by which the next layer can access features stored in the previous layer), and feature geometry (the arrangement of linear directions corresponding to features). 

A primary implication of our work is that linear representations---in addition to being conceptually elegant and highly-compatible with neural network operations---can be remarkably expressive. Under reasonable regimes of sparsity, a single layer of a language model can store an exponential number of features relative to neurons, such that subsequent layers can easily access these features. This amount of expressivity suggests the possibility that the LRH can serve as a serious foundation for deeper theoretical understand of language models.

\section{Upper Bound Proof}\label{sec:upper-bound}
In this section, we prove \Cref{thm:upper}. In particular, we establish that there exists a matrix $A\in \RR^{d\times m}$ with $d = O_{\epsilon}(k^2\log m),$ such that
\begin{equation}
    \norm{A^TAz - z}_\infty < \epsilon
\end{equation}
for all $k$-sparse $z\in [-1,1]^m.$ 

Intuitively, we want to find $A$ such that $A^TA$ is close to the identity matrix---i.e., that $A$ is approximately orthonormal. The following definition will be useful.

\begin{definition}[$\mu$-incoherence]
    Say that a matrix $C$ with columns $c_1,\cdots,c_m$ is $\mu$-incoherent if 
    \begin{equation}
        \langle c_i, c_i\rangle = 1
    \end{equation}
    for all $i\in [m]$, and
    \begin{equation}
        |\langle c_i, c_j\rangle| < \mu
    \end{equation}
    for all $i\neq j$.
\end{definition}

We can then show the following (standard) result, which demonstrates that random matrices are incoherent. (Equivalently, it shows that we can select an exponential number of approximately orthogonal directions.) We give a proof in the case of an appropriately scaled matrix of independent Rademachers, but the result also holds, for example, for i.i.d. Gaussian entries.
\begin{lemma}[Incoherence of Random Matrices]\label{prop:incoherent}
    Let $C\in \RR^{d\times m}$ be a matrix with i.i.d. scaled Rademacher entries $\pm\frac{1}{\sqrt{d}}$. For
    \begin{equation}
        d = O\left(\frac{\log m + \log \frac{1}{\delta}}{\mu^2}\right),
    \end{equation}
    $C$ is $\mu$-incoherent with probability at least $1 - \delta$.
\end{lemma}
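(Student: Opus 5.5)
The plan is a direct concentration-plus-union-bound argument. First, the diagonal condition holds deterministically. Each column $c_i$ has $d$ entries $\pm\frac{1}{\sqrt d}$, so
\begin{equation}
    \langle c_i,c_i\rangle = d\cdot \tfrac{1}{d} = 1
\end{equation}
exactly. This is the convenience of Rademacher entries over Gaussians: no normalization step is needed. The Gaussian case would need an extra concentration bound for the norms followed by rescaling.

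Next, fix a pair $i\neq j$ and write
\begin{equation}
    \langle c_i,c_j\rangle=\frac{1}{d}\sum_{t=1}^d \sigma_t,\qquad \sigma_t:=\sqrt d\,C_{ti}\cdot\sqrt d\,C_{tj}.
\end{equation}
Since the columns are independent and each entry is a symmetric sign, the $\sigma_t$ are themselves i.i.d.\ Rademacher variables. To see this, condition on column $i$: multiplying a uniform sign by a fixed sign gives a uniform sign. Hoeffding's inequality then gives
\begin{equation}
    \Pr\left[\left|\langle c_i,c_j\rangle\right|\ge\mu\right]\le 2\exp\!\left(-\frac{d\mu^2}{2}\right).
\end{equation}

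Finally, take a union bound over the at most $\binom{m}{2}<m^2/2$ unordered pairs. The failure probability is then at most $m^2\exp(-d\mu^2/2)$, which is at most $\delta$ once
\begin{equation}
    d\ge \frac{2(2\log m+\log\frac1\delta)}{\mu^2} = O\!\left(\frac{\log m+\log\frac1\delta}{\mu^2}\right).
\end{equation}
This is exactly the claimed bound. The strict inequality $|\langle c_i,c_j\rangle|<\mu$ in the definition of incoherence is handled automatically, because the event we bounded is the complementary event $\ge\mu$.

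I do not expect a real obstacle. The only step needing care is the claim that the products $\sigma_t$ are independent uniform signs, which is what justifies applying Hoeffding. Alternatively, one can apply Hoeffding directly to the bounded independent summands $C_{ti}C_{tj}\in\{\pm\frac1d\}$, which have mean zero. That route avoids the distributional claim entirely and gives the same exponent.
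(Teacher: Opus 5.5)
Your proposal is correct and follows the paper's proof step for step: deterministic unit column norms, Hoeffding's inequality on each pair, a union bound over pairs, and the choice $d = \frac{2}{\mu^2}\left(2\log m + \log\frac{1}{\delta}\right)$. Your justification that the products $\sigma_t$ are i.i.d.\ signs, and your care with the strict-versus-nonstrict inequality, fill in details the paper leaves implicit.
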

\begin{proof}
    First observe that $|\langle c_i, c_i\rangle| = \sum_{i=1}^d \frac{1}{d} = 1.$ Next, observe that
    \begin{equation}
        |\langle c_i, c_j\rangle| = \frac{1}{d}\sum_{\ell=1}^d X_\ell,
    \end{equation}
    where $X_i\in \{-1,1\}$ are i.i.d.. Then note that
    \begin{align}
        \Pr[|\langle c_i, c_j\rangle| > \mu] = \Pr\left[\left|\sum_{\ell=1}^d X_\ell\right|\ge \mu d\right] \le 2\exp\left(-\frac{d\mu^2}{2}\right),
    \end{align}
    where we apply Hoeffding's inequality. Applying a union bound, $\langle c_i, c_j \rangle < \mu$ for all $i\neq j$ with probability at least
    \begin{equation}
        1 - \binom{m}{2}2\exp\left(-\frac{d\mu^2}{2}\right) \ge 1 - m^2\exp\left(-\frac{d\mu^2}{2}\right).
    \end{equation}
    Taking $d = \frac{2}{\mu^2}(2\log m - \log \delta),$ we have $m^2\exp\left(-\frac{d\mu^2}{2}\right) \le \delta,$ as desired.
\end{proof}

In particular, the result guarantees the existence of a $\mu$-incoherent matrix with $d = O(\frac{\log m}{\mu^2}).$ 

\paragraph{Proof of \Cref{thm:upper}.}
Let $\hat{z}=A^TAz$ and suppose that $A$ is $\mu$-incoherent. Then if $z\in [-1,1]^m$ has support $T$ with $|T|\le k$, we have that
\begin{align}
    \hat{z}_i = \left\langle a_i, \sum_{j=1}^m z_ja_j\right\rangle
    &= z_i\langle a_i,a_i\rangle + \sum_{j\in T\setminus\{i\}} z_j\langle a_i,a_j\rangle\\
    &= z_i + \sum_{j\in T\setminus\{i\}} z_j\langle a_i,a_j\rangle.
\end{align}
Therefore,
\begin{equation}
    \left|\hat{z}_i - z_i\right| < \sum_{j\in T\setminus\{i\}} |z_j\langle a_i,a_j\rangle| < k\mu,
\end{equation}
where in the last inequality, we used that $A$ is $\mu$-incoherent, $|T|\le k,$ and $z_j\in [-1,1]$. This implies that $\norm{\hat{z}-z}_\infty < k\mu.$
Taking $\mu = \frac{\epsilon}{k}$ and applying \Cref{prop:incoherent}, we have
\begin{equation}
    d(m,k,\epsilon) = O\left(\frac{k^2\log m}{\epsilon^2}\right),
\end{equation}
as desired.

\begin{remark}
The construction we have provided has $B=A$, meaning that features are represented with and probed with vectors in the same direction. In fact, as we show in \Cref{sec:feature-geometry}, there exist constructions where a feature's representation and probe vectors are themselves approximately orthogonal, while all feature directions are highly correlated and all probe directions are also highly correlated.

The observation here also implies that attempts to show lower bounds by proving that sufficiently large sets of vectors must contain a pair of correlated vectors does not succeed, since collections of ``good'' representation and probe directions can in general contain many highly correlated directions.
\end{remark}

\section{Lower Bound Proof}\label{sec:lower-bound}
In this section, we prove \Cref{thm:lower}. In particular, we establish that for $\epsilon > \frac{k^{3/2}\sqrt{5}}{\sqrt{m}},$ if there exists $A,B\in \RR^{d\times m}$ such that $\norm{B^TAz-z}_\infty < \epsilon$ for all $k$-sparse $z\in [-1,1]^m,$ then $d = \Omega_\epsilon(\frac{k^2}{\log k}\log \frac{m}{k}).$

\paragraph{Preliminary observations.} Consider $A,B\in \RR^{d\times m}$ such that
$\norm{B^TAz - z}_\infty < \epsilon$
for all $k$-sparse $z\in [-1,1]^m.$ Taking $z=e_i,$ this implies that
\begin{equation}\label{eq:leek}
    |(B^TAe_i)_i - 1| < \epsilon \implies \langle b_i, a_i\rangle \in 1\pm \epsilon.
\end{equation}
for all $i\in [m]$. Furthermore, taking $z=\sum_{j\in T}e_j$ where $T\subseteq [m]\setminus\{i\}, |T| \le k$ implies that
\begin{equation}\label{eq:fennel}
    |(B^TAz)_i - 0| < \epsilon \implies |\langle b_i, \sum_{j\in T}a_j\rangle| < \epsilon.
\end{equation}
To prove \Cref{thm:lower}, our approach will be to show that if $A,B\in \RR^{d\times m}$ satisfy \eqref{eq:leek} with $d < O_\epsilon(\frac{k^2}{\log k} \log \frac{m}{k})$, then we can find a contradiction to \eqref{eq:fennel}.

\paragraph{Finding many features with correlated interference.}
It would suffice to find $i\in [m]$ and a set $T^*\subseteq [m]\setminus \{i\}$ such that $|T^*|\ge 2k$ and
\begin{equation}
    |\langle b_i, a_j\rangle| > \frac{\epsilon}{k}
\end{equation}
for all $j\in T.$ This would imply (by the pigeonhole principle) the existence of a subset $T\subseteq T^*$ with $|T|=k$ such that $|\langle b_i, a_j\rangle| > \frac{\epsilon}{k}$
for all $j\in T$ and such that $|\langle b_i, a_j\rangle|$ has the same sign for all $j\in T.$ In this case,
\begin{equation}
    |\langle b_i, \sum_{j\in T}a_j\rangle| = \sum_{j\in T} |\langle b_i, a_j\rangle| > k\cdot\frac{\epsilon}{k} = \epsilon,
\end{equation}
providing the needed contradiction. The remainder of the proof consists of showing that when $d < O_\epsilon(\frac{k^2}{\log k} \log \frac{m}{k})$, there must exist such $i, T^*$.

\paragraph{Bounding the rank of matrices with small off-diagonals.}
Note that the matrix $B^TA$ has entries $(B^TA)_{ij} = \langle b_i, a_j\rangle$. Also, $\rank(B^TA)\le d$. Our aim is to show that when $d$ is small relative to $m$, under the assumption of \eqref{eq:leek}, there must exist a row of $B^TA$ that has $2k$ off-diagonal entries with absolute value at least $\frac{\epsilon}{k}$. At a high level, the idea we pursue is that matrices that are near-zero off-diagonal (and large on the diagonal) must have large rank.

We begin by stating a result given by \cite{alon2003problems}. The result shows that any square matrix that is one on the diagonal and bounded off-diagonal must have large rank. Of course, if the off-diagonal is 0, the matrix is the identity and thus has full rank. The result shows that we can bound the rank from below given small perturbations.
\begin{lemma}[\cite{alon2003problems}, Theorem 9.3]\label{lem:alon}
Suppose $\frac{1}{\sqrt{n}} < \epsilon < \frac{1}{2}$. Then let $D\in \RR^{n\times n}$ such that $D_{ii}=1$ for all $i\in [n]$ and $|D_{ij}| < \epsilon$ for all $i\neq j$. Then
\begin{equation}
    \rank(D) \ge \Omega\left(\frac{1}{\epsilon^2\log(\frac{1}{\epsilon})}\log n\right).
\end{equation}
\end{lemma}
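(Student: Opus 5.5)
The plan has two steps. First, a trace--Frobenius bound handles the regime where the off-diagonal entries are at most about $1/\sqrt{n}$. Second, passing to entrywise (Hadamard) powers of $D$ reduces the general case to that regime.

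\emph{Step 1 (basic rank bound).} For any real $D\in\RR^{n\times n}$ with singular values $\sigma_1,\dots,\sigma_r$, where $r=\rank(D)$, we have
\begin{equation*}
|\operatorname{tr} D| \le \sum_{i} \sigma_i \le \sqrt{r}\,\norm{D}_F .
\end{equation*}
The first inequality is von Neumann's trace inequality, which also covers the non-symmetric case; the second is Cauchy--Schwarz. With $D_{ii}=1$ and $|D_{ij}|<\eta$ off the diagonal, this gives
\begin{equation*}
\rank(D)\ge \frac{n^2}{n+n(n-1)\eta^2}\ge \frac{n}{1+n\eta^2}.
\end{equation*}
In particular, if $\eta^2\le 1/n$ then $\rank(D)\ge n/2$.

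\emph{Step 2 (Hadamard powers).} For an integer $t\ge 1$, let $D^{\circ t}$ denote the entrywise $t$-th power. Its diagonal is still all ones and its off-diagonal entries are below $\epsilon^t$. Write $D=\sum_{s=1}^r u_s v_s^T$. Then $D^{\circ t}$ lies in the span of the matrices $(u_{s_1}\circ\cdots\circ u_{s_t})(v_{s_1}\circ\cdots\circ v_{s_t})^T$. These depend only on the multiset $\{s_1,\dots,s_t\}$, so
\begin{equation*}
\rank(D^{\circ t})\le \binom{r+t-1}{t}.
\end{equation*}
Choose $t=\lceil \log n/(2\log(1/\epsilon))\rceil$, so that $\epsilon^{2t}\le 1/n$. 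The hypothesis $\epsilon>1/\sqrt n$ guarantees that $t$ is a genuine positive integer and the rounding costs at most a constant factor. The hypothesis $\epsilon<1/2$ keeps $\log(1/\epsilon)$ bounded away from $0$. Step 1 applied to $D^{\circ t}$ then gives
\begin{equation*}
\binom{r+t-1}{t}\ge \frac{n}{2}.
\end{equation*}

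\emph{Step 3 (solving for $r$).} Use $\binom{r+t-1}{t}\le \bigl(e(r+t)/t\bigr)^t$ and take logarithms:
\begin{equation*}
t\log\frac{e(r+t)}{t}\ge \log n-\log 2.
\end{equation*}
Since $t\approx \log n/(2\log(1/\epsilon))$, this forces $\log\bigl((r+t)/t\bigr)\gtrsim 2\log(1/\epsilon)-O(1)$, hence $r+t\gtrsim t/\epsilon^2$. Because $\epsilon<1/2$, the $t$ term is absorbed, and we get
\begin{equation*}
r=\Omega\!\left(\frac{t}{\epsilon^2}\right)=\Omega\!\left(\frac{\log n}{\epsilon^2\log(1/\epsilon)}\right).
\end{equation*}

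\emph{Main obstacle.} The conceptual difficulty is already handled by the Hadamard-power trick. What remains is the bookkeeping in Step 3: the constants $e$ and $2$ must not swamp the $\log(1/\epsilon)$ gain. When $\epsilon$ is a constant close to $1/2$, the gain $2\log(1/\epsilon)$ exceeds the additive $1$ from the factor $e$ only modestly. If that becomes an issue, I would first try choosing $t$ with a smaller constant, say $\epsilon^{2t}\le 1/n$ but $t\approx \log n/(4\log(1/\epsilon))$. Failing that, I would fall back on Step 1 alone, which already gives the bound up to constants in that regime.
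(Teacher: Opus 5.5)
Your route (a trace/Frobenius rank bound followed by Hadamard powers with $\rank(D^{\circ t})\le\binom{r+t-1}{t}$) is the standard argument behind the cited Theorem 9.3 of Alon, which the paper uses without proof. Bounding $|\mathrm{tr}\,D|$ by the sum of singular values is a good choice, because the paper applies the lemma to principal submatrices of $B^TA$, and these need not be symmetric.

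\textbf{The gap is the rounding in Step 2.} Put $x=\frac{\log n}{2\log(1/\epsilon)}$ and take $t=\lceil x\rceil$ as you do. Then the most you can extract from $\binom{r+t-1}{t}\ge n/2$ is $r\gtrsim t\,n^{1/t}$. The value $r=t\,n^{1/t}$ already satisfies $\binom{r+t-1}{t}\ge n$, so this is not an artifact of the estimate $\binom{N}{t}\le(eN/t)^t$.

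Now $n^{1/t}=\epsilon^{-2x/t}$. When $x$ is slightly above an integer, $t\approx x+1$, so
\[
n^{1/t}\approx\epsilon^{-2}\exp\bigl(-2\log(1/\epsilon)/t\bigr).
\]
This is a loss of $\exp\bigl(-\Theta(\log^2(1/\epsilon)/\log n)\bigr)$, which is not a constant once $\log(1/\epsilon)\gg\sqrt{\log n}$. So your claim in Step 3 that $\log\frac{r+t}{t}\gtrsim 2\log(1/\epsilon)-O(1)$ is false in that regime.

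Two examples show the damage:
\begin{itemize}
\item For $\epsilon=n^{-0.3}$ you get $t=2$ and only $r\gtrsim\sqrt n=\epsilon^{-5/3}$, polynomially short of $\epsilon^{-2}$. Here Step 1 alone happens to rescue you.
\item For $\log(1/\epsilon)=(\log n)^{2/3}$ with unlucky rounding, Step 3 falls short by $e^{-\Theta((\log n)^{1/3})}$. Step 1 alone gives only about $\epsilon^{-2}$, which is short by $(\log n)^{1/3}$. Neither argument proves the lemma there.
\end{itemize}
The root cause is that rounding up pushes $\epsilon^{-2t}$ beyond $n$. The rank of an $n\times n$ matrix is capped at $n$, so Step 1 can only certify $n/2$, and the excess is thrown away.

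\textbf{The fix is to round down and keep the full strength of Step 1.} Take $t=\lfloor x\rfloor$. This is $\ge 1$ precisely because $\epsilon>1/\sqrt n$, and it gives $n\epsilon^{2t}\ge 1$. Step 1 then yields
\[
\rank(D^{\circ t})\ge\frac{n}{1+n\epsilon^{2t}}\ge\frac12\epsilon^{-2t},
\]
a bound stated directly in $\epsilon$, with no rounding loss in the exponent. Combining this with $\bigl(\frac{e(r+t)}{t}\bigr)^t\ge\frac12\epsilon^{-2t}$ gives
\[
r\ge t\Bigl(\frac{2^{-1/t}}{e\epsilon^2}-1\Bigr).
\]
\begin{itemize}
\item For $t\ge 2$, this is at least $t\cdot\frac{0.26-\epsilon^2}{\epsilon^2}\ge\frac{0.01\,t}{\epsilon^2}$, since $\frac{1}{\sqrt2\,e}>0.26$ and $\epsilon^2<\frac14$. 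Then $t\ge x/2$ gives the claimed $\Omega\bigl(\frac{\log n}{\epsilon^2\log(1/\epsilon)}\bigr)$.
\item For $t=1$, that is $\epsilon<n^{-1/4}$, apply Step 1 to $D$ itself: $\rank(D)\ge\frac{1}{2\epsilon^2}$, while $\frac{\log n}{\log(1/\epsilon)}<4$.
\end{itemize}

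This also settles the regime you flagged as the main obstacle. $\epsilon$ near $\frac12$ is fine, because $4/e>1$. Your proposed fallback there would not have worked: for constant $\epsilon$, Step 1 alone gives only $\rank(D)\gtrsim\epsilon^{-2}=O(1)$, not $\Omega(\log n)$.
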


It is useful to state a corollary, which generalizes the result to the setting where diagonal entries are bounded away from 0.
\begin{corollary}\label{lem:rank}
Let $\gamma \in (0,1]$ and $\frac{\gamma}{\sqrt{n}} < \epsilon < \frac{\gamma}{2}$. Then let $C\in \RR^{n\times n}$ such that $C_{ii}\ge \gamma$ for all $i\in [n]$ and $|C_{ij}| < \epsilon$ for all $i\neq j$. Then
\begin{equation}
    \rank(C) \ge \Omega\left(\frac{\gamma^2}{\epsilon^2\log(\frac{\gamma}{\epsilon})}\log n\right).
\end{equation}
\end{corollary}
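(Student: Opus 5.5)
The plan is to reduce Corollary~\ref{lem:rank} to Lemma~\ref{lem:alon} by rescaling rows so that the diagonal becomes exactly $1$. Let $\Delta = \mathrm{diag}(C_{11},\dots,C_{nn})$, which is invertible because every $C_{ii}\ge\gamma>0$. Set
\begin{equation*}
D = \Delta^{-1}C .
\end{equation*}
Since $\Delta^{-1}$ is invertible, $\rank(D)=\rank(C)$. By construction $D_{ii}=1$. Off the diagonal,
\begin{equation*}
|D_{ij}| = \frac{|C_{ij}|}{C_{ii}} < \frac{\epsilon}{\gamma} =: \epsilon' .
\end{equation*}
The hypothesis $\frac{\gamma}{\sqrt n}<\epsilon<\frac{\gamma}{2}$ is exactly $\frac{1}{\sqrt n}<\epsilon'<\frac12$, which is what Lemma~\ref{lem:alon} requires.

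Applying Lemma~\ref{lem:alon} to $D$ then gives
\begin{equation*}
\rank(C)=\rank(D)\ge \Omega\left(\frac{1}{\epsilon'^2\log(1/\epsilon')}\log n\right)=\Omega\left(\frac{\gamma^2}{\epsilon^2\log(\gamma/\epsilon)}\log n\right),
\end{equation*}
which is the claimed bound.

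The only step needing care is the direction of the off-diagonal inequality. We need $C_{ii}\ge\gamma$ to get the upper bound $|C_{ij}|/C_{ii}<\epsilon/\gamma$. Dividing by a larger diagonal only shrinks the off-diagonal entries, so the bound goes the right way. Note also that a monotone-in-$\epsilon$ bound from Lemma~\ref{lem:alon} is not needed: we apply it directly with $\epsilon'$, and $\epsilon'$ lies in the admissible range exactly.

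Two remarks on the scaling choice. First, one could instead use the symmetric scaling $\Delta^{-1/2}C\Delta^{-1/2}$, which gives $|D_{ij}|<\epsilon/\sqrt{C_{ii}C_{jj}}\le\epsilon/\gamma$ and works equally well. Row scaling is simpler because it does not require $C$ to be symmetric, and $B^TA$ need not be. Second, the assumption $\gamma\le 1$ is not used in the argument; it only matches the intended application, where $\gamma=1-\epsilon$ comes from \eqref{eq:leek}.
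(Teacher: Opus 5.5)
Your proof is correct and matches the paper's argument: rescale the $i$-th row by $1/C_{ii}$ (rank unchanged), obtain unit diagonal with off-diagonal entries strictly below $\epsilon/\gamma \in (1/\sqrt{n}, 1/2)$, and apply Lemma~\ref{lem:alon} with $\epsilon/\gamma$. Your strict bound $|C_{ij}|/C_{ii} < \epsilon/\gamma$ is slightly more careful than the paper's $\le$. One quibble on your side remark: the symmetric scaling $\Delta^{-1/2}C\Delta^{-1/2}$ does not require $C$ to be symmetric either; it only preserves symmetry when present. So that is not a reason to prefer row scaling.
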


\begin{proof}
    Let $D$ be the matrix obtained by scaling each entry in the $i$-th row of $C$ by $\frac{1}{C_{ii}}$. Then $\rank(C)=\rank(D)$. Then $D_{ii}=1$ for all $i\in [m]$ and $|D_{ij}| = \frac{|C_{ij}|}{C_{ii}}\le \frac{\epsilon}{\gamma}$ for all $i\neq j$. Then the result follows by applying \Cref{lem:alon} with $\epsilon = \frac{\epsilon}{\gamma}.$
\end{proof}

Now notice that $|(B^TA)_{ii}| > 1 - \epsilon$ for all $i\in [m]$ (by \eqref{eq:leek}). Therefore, we can apply \Cref{lem:rank} to $B^TA$, taking $\gamma = 1 - \epsilon.$ However, applying \Cref{lem:rank} directly like this does not provide the result we desire, since it only shows the existence of a single large off-diagonal entry in $B^TA$. (A sufficiently large such entry would also contradict \eqref{eq:fennel}, but applying \Cref{lem:rank} in this way would only imply a $\Omega_\epsilon (\log m)$ lower bound, losing dependence on $k$.)

\paragraph{Applying \Cref{lem:rank} to submatrices.} Instead, our approach is to apply \Cref{lem:rank} to all principal submatrices of $C=B^TA$ of a certain size. For a set $R\subseteq [m]$, let $C_R \in \RR^{|R|\times |R|}$ denote the submatrix given by taking the rows and columns of $C$ in $R$. Then $C_R$ also has diagonal entries at least $\gamma = 1 - \epsilon,$ and $\rank(C_R)\le \rank(C) \le d.$

Setting $r = \frac{m}{4k+1},$ \Cref{lem:rank} implies that there exists a constant $\alpha > 0$ such that if $|R|=r$ and
\begin{equation}
    \rank(C_R) < \alpha \frac{(1 - \epsilon)^2\log r}{(\frac{\epsilon}{k})^2\log (\frac{k(1 - \epsilon)}{\epsilon})},
\end{equation}
then $C_R$ has an off-diagonal entry with absolute value at least $\frac{\epsilon}{k}.$ (Also note that $\epsilon > \frac{k^{3/2}\sqrt{5}}{\sqrt{m}}, \implies \frac{\epsilon}{k} > \frac{1}{\sqrt{\frac{m}{4k+1}}} = \frac{1}{\sqrt{r}}$ for $k\ge 1$, satisfying the assumption in \Cref{lem:rank}.) Equivalently, for any set $|R|\subseteq [m]$, there exist $i,j\in R$ with $i\neq j$ such that $|C_{ij}| > \frac{\epsilon}{k}.$ In other words, among any $r$ features, there must exist an interfering pair.

\paragraph{Graph reduction and Tur\'an's theorem.} We would now like to show that the presence of a large off-diagonal entry in any submatrix implies the existence of a row with many large off-diagonal entries. Consider the undirected graph $G = ([m],E)$ such that $(i,j)\in E$ for $i\neq j$ if and only if $|\langle b_i, a_j \rangle| > \frac{\epsilon}{k}$ or $|\langle b_j, a_i \rangle| > \frac{\epsilon}{k}$. Then we know from above that if
\begin{equation}
    d\le \alpha \frac{(1 - \epsilon)^2\log r}{(\frac{\epsilon}{k})^2\log (\frac{k(1 - \epsilon)}{\epsilon})},
\end{equation}
then for every $R\subseteq [m]$ with $|R|=r$, there exists $i,j\in R$ with $i\neq j$ such that $(i,j)\in E.$ In other words, $G$ has no independent set of size $r$, or equivalently, $\overline{G}$ has no clique of size $r.$ Recall that Tur\'an's theorem shows that a graph with $m$ edges without an $r$-clique has at most $(1 - \frac{1}{r-1})\frac{m^2}{2} < (1 - \frac{1}{r})\frac{m^2}{2}$ edges. This implies that $G$ has at least
\begin{equation}
    \binom{m}{2} - \left(1 - \frac{1}{r}\right)\frac{m^2}{2} = \frac{m^2}{2r}-\frac{m}{2}
\end{equation}
edges. Now note that every edge in $G$ implies at least one off-diagonal entry in $B^TA$ with absolute value at least $\frac{\epsilon}{k}.$ This implies the existence of a row with at least $\frac{m}{2r} - \frac{1}{2}$ off-diagonal entries with absolute value at least $\frac{\epsilon}{k}$ (by the pigeonhole principle). Setting $r = \frac{m}{4k+1}$ implies a row in $B^TA$ with $2k$ entries with absolute values at least $\frac{\epsilon}{k}$, providing the desired contradiction.

Therefore, we must have that
\begin{equation}
    d > \alpha\frac{\log \frac{m}{4k+1}}{(\frac{\epsilon}{(1-\epsilon)k})^2\log \frac{(1-\epsilon)k}{\epsilon}} = \Omega_\epsilon\left(\frac{k^2}{\log k}\log \frac{m}{k}\right),
\end{equation}
as desired.

\section{The geometry of feature directions}\label{sec:feature-geometry}
One benefit of the linear representation hypothesis is that it provides a geometric conception of how language models represent concepts. In particular, features are thought of as directions in activation space. This allows us to state geometric relationships between different features. For example, it is often suggested that unrelated features are represented by \textit{orthogonal} directions. In this section, we establish what is implied and not implied about the geometry of feature vectors under our theoretical framework.

\Cref{prop:geometry-1} demonstrates two unintuitive facts. First, a feature can be represented and accessed by \textit{different} vectors; in fact, a feature's representation and probe directions can be almost perfectly orthogonal. Second, different features can have representation vectors that are almost perfectly in the same direction; likewise, different features can have probe vectors that are almost perfectly in the same direction.

\Cref{prop:geometry-2} then establishes that this unusual behavior can be eliminated if we further restrict the norms of both the representation and probe vectors. In particular, if we bound the norm of probe and representation vectors to be near $1$, probe and representation directions must be closely aligned, and different features must represented and probed by approximately orthogonal directions.

\begin{proposition}\label{prop:geometry-1}
    Consider constants $\epsilon, \delta\in (0,1)$. Then for $d = O_{\epsilon,\delta}(k^2\log m),$ there exists $A,B\in \RR^{d\times m}$ such that
    \begin{equation}
        \norm{B^TAz - z}_\infty < \epsilon
    \end{equation}
    for all $k$-sparse $z\in [-1,1]^m,$ and where
    \begin{enumerate}
        \item[(i)] $|\langle \frac{a_i}{\norm{a_i}}, \frac{b_i}{\norm{b_i}}\rangle| < \delta + o(1)$ for all $i\in [m]$,
        \item[(ii)] $\langle \frac{a_i}{\norm{a_i}}, \frac{a_j}{\norm{a_j}}\rangle
        > 1 - \delta - o(1)$ for all $i\neq j$,
        \item[(iii)] $\langle \frac{b_i}{\norm{b_i}}, \frac{b_j}{\norm{b_j}}\rangle > 1 - \delta - o(1)$ for all $i\neq j$.
    \end{enumerate}
\end{proposition}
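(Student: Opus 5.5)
Start from an incoherent matrix and add a large common component, split between the representation and probe sides. Let $C\in\RR^{d'\times m}$ be $\mu$-incoherent with $\mu=\epsilon/k$ and $d'=O(k^2\log m/\epsilon^2)$, as given by \Cref{prop:incoherent}. Its columns $c_i$ are unit vectors with $|\langle c_i,c_j\rangle|<\mu$. Work in $\RR^{d}$ with $d=d'+2$. Let $u,v$ be two orthonormal directions orthogonal to the copy of $\RR^{d'}$. For a large parameter $t>0$, set
\begin{equation}
a_i = (c_i, t, 0), \qquad b_i = (c_i, 0, t),
\end{equation}
so $a_i=c_i+tu$ and $b_i=c_i+tv$. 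Then $\langle b_i,a_j\rangle=\langle c_i,c_j\rangle$, because $\langle u,v\rangle=\langle u,c_j\rangle=\langle v,c_i\rangle=0$. Hence $B^TA=C^TC$, and the recovery guarantee $\norm{B^TAz-z}_\infty<k\mu=\epsilon$ follows verbatim from the proof of \Cref{thm:upper}. The dimension is $d'+2=O_\epsilon(k^2\log m)$.

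The geometric properties are direct norm computations, since $\norm{a_i}=\norm{b_i}=\sqrt{1+t^2}$.
\begin{itemize}
\item[(i)] $\langle a_i,b_i\rangle/(1+t^2)=1/(1+t^2)$.
\item[(ii)] $\langle a_i,a_j\rangle/(1+t^2)=(\langle c_i,c_j\rangle+t^2)/(1+t^2)>(t^2-\mu)/(1+t^2)$.
\item[(iii)] The same bound holds for the $b$'s.
\end{itemize}
Choosing $t$ with $1/(1+t^2)\le\delta/2$, i.e.\ $t^2\ge 2/\delta$, gives (i) with room to spare. For (ii) and (iii), the bound is $1-(1+\mu)/(1+t^2)\ge 1-\delta$ once $t^2\ge (1+\mu)/\delta$. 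Any remaining slack (the $\mu$ term, which is at most $\epsilon/k$) is absorbed into the $o(1)$ allowance, or removed by taking $t$ slightly larger. Since $t$ depends only on $\delta$, and the two extra coordinates cost $O(1)$ dimensions, we get $d=O_{\epsilon,\delta}(k^2\log m)$. In fact this is independent of $\delta$ apart from the additive $2$.

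I do not expect a serious obstacle. The only point needing care is that the shift vectors for $A$ and $B$ must be mutually orthogonal, and orthogonal to all of the $c$'s. This is what keeps $B^TA$ unchanged while forcing the representation and probe directions apart. Putting them in fresh coordinates guarantees this. One could instead use a single shared shift $w$ with $a_i=c_i+tw$ and $b_i=c_i-sw$, but that would introduce an additive $-ts$ in $B^TA$, which would have to be cancelled. The orthogonal two-coordinate trick avoids this. So the proof reduces to checking the three inner-product formulas and choosing $t=\Theta(\delta^{-1/2})$.
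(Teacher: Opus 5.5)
Your proof is correct and uses the same construction as the paper: shift every representation vector by one common vector ($\lambda a^*$) and every probe vector by a different common vector ($\lambda b^*$), so that representation directions cluster together, probe directions cluster together, and the two clusters are nearly orthogonal. The one difference is where the shift vectors live. The paper takes $a^*,b^*$ as two extra columns of the $\mu$-incoherent matrix, so they are only approximately orthogonal to each other and to the $c_i$; this produces cross terms of size $(1+\lambda)^2\mu$ and forces $\mu$ to shrink by a $\delta$-dependent factor. Your fresh coordinates instead give $B^TA=C^TC$ exactly and $d=O(k^2\log m/\epsilon^2)+2$ independent of $\delta$, and they make the $o(1)$ terms in (i)--(iii) unnecessary, since $\mu<1$ and $t^2\ge 2/\delta$ already give strict bounds.
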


\begin{proof}
    We give a construction. Let $c_1,c_2,\cdots,c_m,a^*,b^*$ be the columns of a $\mu$-incoherent matrix $\RR^{d\times m+2}$. Then set
    \begin{align}
        a_i &= c_i + \lambda a^*\\
        b_i &= c_i + \lambda b^*
    \end{align}
    for $\lambda = \sqrt{\frac{1}{\delta} - 1}.$
    We begin by showing that for an appropriately chosen $\mu$, $\norm{B^TAz - z}_\infty < \epsilon$ for all $k$-sparse $z\in [-1,1]^m.$ Now let $\hat{z} = B^TAz$ for any $z\in [-1,1]^m$ with support $T$. Then
    \begin{equation}
        \hat{z}_i = \left\langle b_i, \sum_{j=1}^m z_ja_j \right\rangle
        = z_i\langle b_i, a_i\rangle + \sum_{j\in T\setminus \{i\}} z_j\langle b_i, a_j\rangle.
    \end{equation}
    Therefore,
    \begin{align}
        |\hat{z}_i - z_i| &\le |z_i(\langle b_i, a_i\rangle - 1)| + \sum_{j\in T\setminus \{i\}} |z_j\langle b_i, a_j\rangle|\\
        &\le |z_i(\langle b_i, a_i \rangle - 1)| + \sum_{j\in T\setminus \{i\}} |\langle b_i, a_j\rangle|,
    \end{align}
    where the last inequality follows from observing that $z_j \in [-1,1]$ for all $j\in [m].$ Now observe that
    \begin{align}
        |z_i(\langle b_i, a_i\rangle - 1)| &= |z_i||\langle c_i + \lambda b^*, c_i + \lambda a^*\rangle - 1|\\
        &\le |z_i|\left(|\langle c_i, \lambda a^*\rangle| + |\langle \lambda b^*, c_i \rangle| + |\langle \lambda b^*, \lambda a^* \rangle|\right) \label{eq:apple-1}\\
        &= |z_i|(2\lambda  + \lambda^2)\mu\label{eq:apple-2}\\
        &< |z_i|(1 + \lambda)^2\mu,
    \end{align}
    where \eqref{eq:apple-1} follows because $|\langle c_i, c_i \rangle| = 1$ and \eqref{eq:apple-2} follows from $\mu$-incoherence. Similarly,
    \begin{align}
        |\langle b_i, a_j\rangle| &= |\langle c_i + \lambda b^*, c_i + \lambda a^*\rangle|\\
        &= |\langle c_i, c_j\rangle| + |\langle c_i, \lambda a^*\rangle| + |\langle \lambda b^*, c_i \rangle| + |\langle \lambda b^*, \lambda a^* \rangle|\\
        &\le (1+2\lambda+\lambda^2)\mu = (1 + \lambda)^2\mu.
    \end{align}
    This implies that 
    \begin{equation}
        |z_i(\langle b_i, a_i \rangle - 1)| + \sum_{j\in T\setminus \{i\}} |\langle b_i, a_j\rangle| < k (1 + \lambda)^2\mu,
    \end{equation}
    where we use the fact that if $|z_i| > 0$, then $|T\setminus \{i\}|\le k-1.$ Therefore, it suffices to take $\mu = \frac{\epsilon}{k(1+\lambda^2)}$ to ensure that $\norm{\hat{z}-z}_\infty < \epsilon.$ From \Cref{prop:incoherent}, there exists a $\mu$-incoherent matrix in $\RR^{d\times m+2}$ for
    \begin{equation}
        d = O\left(\frac{k^2(1+\lambda)^4}{\epsilon^2}\log (m+2)\right) = O\left(\frac{k^2(1+\lambda)^4}{\epsilon^2}\log m\right) = O_{\epsilon,\delta}(k^2\log m).
    \end{equation}

    \paragraph{Establishing (i).} We now show that the construction satisfies (i). Observe that for all $i\in [m],$
    \begin{align}
        \left|\left\langle \frac{a_i}{\norm{a_i}}, \frac{b_i}{\norm{b_i}} \right\rangle\right|
        &= \frac{|\langle c_i + \lambda a^*, c_i + \lambda b^*\rangle|}{\norm{a_i}\norm{b_i}}\\
        &\le \frac{|\langle c_i, c_i\rangle| + |\langle c_i, \lambda b^*\rangle| + |\langle \lambda a^*, c_i \rangle| + |\langle \lambda a^*, \lambda b^* \rangle|}{\norm{a_i}\norm{b_i}}\\
        &\le \frac{1 + (2\lambda  + \lambda^2)\mu}{\norm{a_i}\norm{b_i}} = \frac{1 + \frac{(2\lambda  + \lambda^2)\epsilon}{k(1+\lambda^2)}}{\norm{a_i}\norm{b_i}} = \frac{1 + o(1)}{\norm{a_i}\norm{b_i}}.\label{eq:dog}
    \end{align}
    Note that
    \begin{align}
        \langle a_i, a_i\rangle &= \langle c_i + \lambda a^*, c_i + \lambda a^* \rangle\\ &= \langle c_i, c_i\rangle + 2\langle c_i, \lambda a^*\rangle + \langle \lambda a^*, \lambda a^*\rangle\\
        &\le 1 + \lambda^2 - 2\lambda\mu =  1 + \lambda^2 - \frac{2\lambda\epsilon}{k(1+\lambda^2)} = 1 + \lambda^2 - o(1).
    \end{align}
    Therefore, $\norm{a_i} \le \sqrt{1 + \lambda^2 - o(1)}$. Analogously, $\norm{b_i} \le \sqrt{1 + \lambda^2 - o(1)}.$
    So we have, plugging into \eqref{eq:dog},
    \begin{align}
         \left\langle \frac{a_i}{\norm{a_i}}, \frac{b_i}{\norm{b_i}} \right\rangle
         \le \frac{1 + o(1)}{1 + \lambda^2 - o(1)} = \frac{1}{1 + \lambda^2} + o(1) = \delta + o(1),
    \end{align}
    where we recall that we set $\lambda = \sqrt{\frac{1}{\delta} - 1}.$

    \paragraph{Establishing (ii,iii).} We now show that the construction satisfies (ii). (iii) follows analogously. For all $i\neq j$,
     \begin{align}
        \left|\left\langle \frac{a_i}{\norm{a_i}}, \frac{a_j}{\norm{a_j}} \right\rangle\right|
        &= \frac{|\langle c_i + \lambda a^*, c_j + \lambda a^*\rangle|}{\norm{a_i}\norm{a_j}}\\
        &= \frac{\langle c_i, c_j\rangle + \langle c_i, \lambda a^*\rangle + \langle \lambda a^*, c_j \rangle + \langle \lambda a^*, \lambda a^* \rangle}{\norm{a_i}\norm{a_j}}\\
        &\ge \frac{\lambda^2 - (1 + 2\lambda)\mu}{1 + \lambda^2 + o(1)}\\
        &= \frac{\lambda^2 - o(1)}{1 + \lambda^2 + o(1)} = 1 - \frac{1}{1 + \lambda^2} - o(1) = 1 - \delta - o(1),
    \end{align}
    where we again recall that where we recall that $\lambda = \sqrt{\frac{1}{\delta} - 1}.$
    
\end{proof}

\begin{remark}
    In practice, different representation and probe vectors are seen in different decoder and encoder weights in SAEs, respectively (e.g., \citep{templeton2024scaling, paulo2025sparse}). This had led to some confusion about which sets of weights are the ``true'' feature vectors. \Cref{prop:geometry-1} illustrates that under the LRH, features have both a representation vector and a probe direction, and that these directions can be \textit{distinct}.
\end{remark}

We now show how the ``unusual'' behavior in \Cref{prop:geometry-1} is controlled by the norm of the representation and probe vectors.

\begin{proposition}\label{prop:geometry-2}
    Consider any constants $\epsilon > 0, \gamma \ge 1$. Then if $\norm{a_i},\norm{b_i} \le \gamma$ for all $i\in [m]$ and
    \begin{equation}
        \norm{B^TAz - z}_\infty < \epsilon
    \end{equation}
    for all $k$-sparse $z\in [-1,1]^m$, it holds that
    \begin{enumerate}
        \item[(i)] $\langle \frac{a_i}{\norm{a_i}}, \frac{b_i}{\norm{b_i}}\rangle \ge \frac{1-\epsilon}{\gamma^2}$ for all $i\in [m],$
        \item[(ii)] $\langle \frac{a_i}{\norm{a_i}}, \frac{a_j}{\norm{a_j}}\rangle \le \frac{\epsilon\gamma^2}{1 - \epsilon} + \sqrt{1 - \frac{(1 - \epsilon)^2}{\gamma^4}}$ for all $i\neq j,$
        \item[(iii)] $\langle \frac{b_i}{\norm{b_i}}, \frac{b_j}{\norm{b_j}}\rangle \le \frac{\epsilon\gamma^2}{1 - \epsilon} + \sqrt{1 - \frac{(1 - \epsilon)^2}{\gamma^4}}$ for all $i\neq j.$
    \end{enumerate}
\end{proposition}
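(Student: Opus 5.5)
The plan is to read everything off the two preliminary consequences of the recovery condition used in \Cref{sec:lower-bound}. Taking $z=e_i$ gives \eqref{eq:leek}, i.e.\ $\langle b_i,a_i\rangle\in 1\pm\epsilon$, and also $|\langle b_j,a_i\rangle|<\epsilon$ for every $j\neq i$ (the $j$-th coordinate of $B^TAe_i$ must be within $\epsilon$ of $0$). Throughout, write $u_i=a_i/\norm{a_i}$ and $v_i=b_i/\norm{b_i}$; these are well defined because $\langle b_i,a_i\rangle>1-\epsilon>0$ forces $a_i,b_i\neq 0$.

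For (i), I would divide $\langle b_i,a_i\rangle\ge 1-\epsilon$ by $\norm{a_i}\norm{b_i}\le\gamma^2$, which gives $\langle u_i,v_i\rangle\ge (1-\epsilon)/\gamma^2=:c$ directly. The same inequality also gives lower bounds on the norms, which I will need next: $\norm{a_i}\norm{b_i}\ge 1-\epsilon$ together with $\norm{a_i},\norm{b_i}\le\gamma$ yields $\norm{a_i},\norm{b_i}\ge(1-\epsilon)/\gamma$.

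For (ii), the key step is a geometric decomposition of $u_i$ along $v_i$. Write $u_i=\langle u_i,v_i\rangle v_i+w$ with $w\perp v_i$, so that $\norm{w}=\sqrt{1-\langle u_i,v_i\rangle^2}\le\sqrt{1-c^2}$ by (i); note that $c^2=(1-\epsilon)^2/\gamma^4$, which matches the square-root term. Then
\begin{equation*}
\langle u_i,u_j\rangle=\langle u_i,v_i\rangle\langle v_i,u_j\rangle+\langle w,u_j\rangle\le|\langle v_i,u_j\rangle|+\sqrt{1-c^2}.
\end{equation*}
It remains to bound the cross term $|\langle v_i,u_j\rangle|=|\langle b_i,a_j\rangle|/(\norm{b_i}\norm{a_j})<\epsilon/(\norm{b_i}\norm{a_j})$, using the norm lower bounds above. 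Part (iii) is the same argument with the roles of $A$ and $B$ swapped: decompose $v_i$ along $u_i$ and use $|\langle b_j,a_i\rangle|<\epsilon$.

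The main obstacle is getting the exact constant $\epsilon\gamma^2/(1-\epsilon)$ in front of the cross term. The crude bound $\norm{b_i}\norm{a_j}\ge(1-\epsilon)^2/\gamma^2$ only yields $\epsilon\gamma^2/(1-\epsilon)^2$. To do better I would try the following. Either avoid normalizing $a_j$ separately and bound via $\norm{a_j}\ge(1-\epsilon)/\norm{b_j}\ge(1-\epsilon)/\gamma$ combined with a sharper handling of $\norm{b_i}$. Or decompose the unnormalized vector $a_i$ against $b_i$ and divide only at the end. If neither recovers the stated constant, I would state the bound with $(1-\epsilon)^2$ in the denominator. This has the same qualitative content, since both constants vanish as $\epsilon\to0$, and I would note the discrepancy.
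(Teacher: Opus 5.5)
Your proofs of (i) and of the norm bounds $\norm{a_i},\norm{b_i}>(1-\epsilon)/\gamma$, and your decomposition $u_i=\alpha v_i+w$, follow the paper's own route. You have also put your finger on the right spot. As written, however, your argument proves (ii)--(iii) only with $\epsilon\gamma^2/(1-\epsilon)^2$, so it does not establish the proposition as stated.

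The paper reaches $\epsilon\gamma^2/(1-\epsilon)$ by asserting $|\langle v_i,u_j\rangle|\le|\langle b_i,a_j\rangle|\gamma^2/(1-\epsilon)$, citing $\norm{a_i},\norm{b_i}\ge(1-\epsilon)/\gamma^2$. That inequality does not follow from those bounds. Your first rescue, a sharper bound on the cross term by itself, cannot work when $k=1$, because the cross term can genuinely exceed $\epsilon\gamma^2/(1-\epsilon)$. Take $\gamma=1$, $\epsilon=0.1$, $m=2$, and
\[ a_1=e_1,\quad b_1=0.901\,e_1,\quad b_2=e_2,\quad a_2=0.99\,\bigl(0.112\,e_1+\sqrt{1-0.112^2}\,e_2\bigr). \]
Then $B^TA$ has diagonal entries $0.901$ and $0.984$ and off-diagonal entries $0.0999\ldots$ and $0$. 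So every $1$-sparse $z\in[-1,1]^2$ is recovered within $0.1$, yet $\langle v_1,u_2\rangle=0.112>1/9$. Here $u_1=v_1$, so (ii) itself is not violated. Also, like the paper, you silently need $\epsilon<1$: for $\epsilon>1$ the right side of (ii) can be negative and the statement fails.

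The missing idea is that the two terms cannot both be at their worst case. The cross term is large only when $\norm{b_i}$ is near $(1-\epsilon)/\gamma$, and that forces $a_i$ and $b_i$ to be nearly parallel. To exploit this, write $1/\norm{b_i}=\alpha\norm{a_i}/\langle a_i,b_i\rangle<\alpha\gamma/(1-\epsilon)$ instead of $1/\norm{b_i}<\gamma/(1-\epsilon)$. Combined with $\norm{a_j}>(1-\epsilon)/\gamma$, this gives $|\langle v_i,u_j\rangle|<\rho\alpha$ with $\rho=\epsilon\gamma^2/(1-\epsilon)^2$. Keeping $\norm{w}=\sqrt{1-\alpha^2}$ rather than replacing it by $\sqrt{1-c^2}$, you get
\[ \langle u_i,u_j\rangle<g(\alpha):=\rho\alpha+\sqrt{1-\alpha^2},\qquad \alpha\in[c,1]. \]
The function $g$ is concave, and $g(c)=\frac{\epsilon}{1-\epsilon}+\sqrt{1-c^2}\le T:=\frac{\epsilon\gamma^2}{1-\epsilon}+\sqrt{1-c^2}$.
\begin{itemize}
\item If $g'(c)\le 0$, then $g\le g(c)\le T$ on $[c,1]$.
\item If $g'(c)>0$, i.e.\ $\rho\sqrt{1-c^2}>c$, then $T>\frac{(1-\epsilon)c}{\sqrt{1-c^2}}+\sqrt{1-c^2}\ge 1$. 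The last step holds because $\sqrt{1-c^2}\,(1-\sqrt{1-c^2})\le c^2/2\le(1-\epsilon)c$, using $c\le 1-\epsilon$. So (ii) holds trivially.
\end{itemize}
Part (iii) is the same, using $1/\norm{a_i}=\alpha\norm{b_i}/\langle a_i,b_i\rangle$ and $|\langle b_j,a_i\rangle|<\epsilon$.

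For $k\ge 2$ there is a shortcut closer to your first idea. Testing $z=e_i-\mathrm{sgn}(\langle b_i,a_j\rangle)\,e_j$ gives $\langle b_i,a_i\rangle>1-\epsilon+x$ with $x=|\langle b_i,a_j\rangle|<\epsilon$. Hence $\norm{b_i}>(1-\epsilon+x)/\gamma$, and
\[ |\langle v_i,u_j\rangle|<\frac{x\gamma^2}{(1-\epsilon)(1-\epsilon+x)}<\frac{\epsilon\gamma^2}{1-\epsilon}, \]
after which your argument goes through unchanged. For (iii), sharpen $\norm{b_j}$ in the same way via $\langle b_j,a_j\rangle>1-\epsilon+|\langle b_j,a_i\rangle|$. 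Either way, the stated constant is correct and no weakening is needed.
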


\begin{proof}
    We must have that $\langle a_i, b_i\rangle \ge 1 - \epsilon$ since $\norm{B^TAe_i - e_i}_\infty \le \epsilon$. It follows directly that
    \begin{equation}
       \langle a_i, b_i \rangle > 1 - \epsilon \implies \left\langle\frac{a_i}{\norm{a_i}}, \frac{b_i}{\norm{b_i}}\right\rangle \ge \frac{1-\epsilon}{\gamma^2},
    \end{equation}
    establishing (i).

    We now show (ii). By Cauchy-Schwarz, $\norm{a_i}\norm{b_i} \ge \langle a_i, b_i\rangle > 1 - \epsilon$ for all $i\in [m]$. So since $\norm{a_i},\norm{b_i} \le \lambda$ for all $i\in [m]$, we have that $\norm{a_i},\norm{b_i}\ge \frac{1-\epsilon}{\gamma^2}$ for all $i\in [m]$. Therefore, we have that
    \begin{equation}
        \left|\left\langle \frac{b_i}{\norm{b_i}}, \frac{a_j}{\norm{a_j}}\right\rangle\right| \le |\langle b_i, a_j\rangle|\frac{\gamma^2}{1-\epsilon} < \frac{\epsilon\gamma^2}{1 - \epsilon}.
    \end{equation}
    Let $\Tilde{a_i}$ denote $\frac{a_i}{\norm{a_i}}$ and $\Tilde{b_i}$ denote $\frac{b_i}{\norm{b_i}}$ for all $i\in [m].$ Then write
    \begin{align}
        \Tilde{a_i} &= \alpha_i\Tilde{b_i} + \Tilde{a_i}_\perp\\
        \Tilde{a_j} &= \alpha_j\Tilde{b_i} + \Tilde{a_j}_\perp,
    \end{align}
    where $\alpha_i = \langle \Tilde{a_i}, \Tilde{b_i}\rangle$,$\alpha_j = \langle \Tilde{a_j}, \Tilde{b_j}\rangle$, and $\Tilde{a_i}_\perp, \Tilde{a_j}_\perp \perp \Tilde{b_i}.$ Then
    \begin{equation}
        \langle \Tilde{a_i}, \Tilde{a_j}\rangle = \alpha_i\alpha_j + \langle \Tilde{a_i}_\perp, \Tilde{a_j}_\perp\rangle.
    \end{equation}
    By Cauchy-Schwarz,
    \begin{equation}
        |\langle\Tilde{a_i}_\perp, \Tilde{a_j}_\perp\rangle| \le \norm{\Tilde{a_i}_\perp}\norm{\Tilde{a_j}_\perp} = \sqrt{(1 - \alpha_i^2)(1 - \alpha_j^2)}.
    \end{equation}
    Therefore,
    \begin{equation}
        |\langle \Tilde{a_i}, \Tilde{a_j}\rangle| \le |\alpha_i||\alpha_j| + \sqrt{(1 - \alpha_i^2)(1 - \alpha_j^2)}.
    \end{equation}
    We have that $1\ge \alpha_i > \frac{1 - \epsilon}{\gamma^2}$ and $0 < \alpha_j < \frac{\epsilon\gamma^2}{1 - \epsilon}.$ Therefore,
    \begin{equation}
        |\langle \Tilde{a_i}, \Tilde{a_j}\rangle| \le \frac{\epsilon\gamma^2}{1 - \epsilon} + \sqrt{1 - \frac{(1 - \epsilon)^2}{\gamma^4}},
    \end{equation}
    as desired.
    (iii) follows analogously.
\end{proof}

In particular, taking $\epsilon\rightarrow 0, \gamma = 1$ in \Cref{prop:geometry-2}, we get that
\begin{enumerate}
    \item[(i)] $\langle \frac{a_i}{\norm{a_i}}, \frac{b_i}{\norm{b_i}}\rangle \rightarrow 1$ for all $i\in [m],$
    \item[(ii)] $\langle \frac{a_i}{\norm{a_i}}, \frac{a_j}{\norm{a_j}}\rangle \rightarrow 0$ for all $i\neq j,$
    \item[(iii)] $\langle \frac{b_i}{\norm{b_i}}, \frac{b_j}{\norm{b_j}}\rangle \rightarrow 0$ for all $i\neq j.$
\end{enumerate}
This shows that when representation and probe vectors are restricted to have unit norm, the clean behavior of equal representation and probe directions emerges (meaning that each feature is given by a single direction), and that different features have orthogonal directions.

\section{Activation functions and binary classification}\label{sec:binary}
In this section, we establish a generalization of our lower bound, which simultaneous resolves two additional questions: (1) What happens when we add an activation function or bias? (2) What about when features only take on binary values (as may be the case for many common features)?

In particular, we are curious how bounds may change if we allow $g(x)=\ReLU(Wx+b)$, since individual layers in a neural network consist of a linear transformation plus a non-linear activation. We are also interested in the setting in which $z\in \{0,1\}^m$ (i.e., features are binary).

Here, we prove that when feature representations are restricted to the Boolean cube $\{0,1\}^m,$ $\Omega(k^2\log (m/k))$ embedding dimensions are required to linearly separate embeddings with and without a feature. A corollary is that adding a bias and monotonic activation function does not increase the asymptotic capacity of embeddings.

\begin{theorem}\label{thm:threshold}
    Suppose that there exist $A,B \in \RR^{d\times m}$ and $t\in \RR^m$ such that for all $i\in [m]$ and $k$-sparse $z\in \{0,1\}^m$,
    \begin{equation}\label{eq:pineapple}
        (B^TAz)_i > t_i
    \end{equation}
    if and only if $z_i=1.$ Then for $k < \sqrt{m}$,
    \begin{equation}
        d = \Omega\left(\frac{k^2}{\log k}\log\frac{m}{k}\right).
    \end{equation}
\end{theorem}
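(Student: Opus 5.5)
The plan is to reduce \Cref{thm:threshold} to the same structural fact used in the proof of \Cref{thm:lower}: some row of $M := B^TA$ has at least $2k$ off-diagonal entries that are large relative to its diagonal entry. Then I would rerun the submatrix-plus-Tur\'an argument. Throughout I assume $k \ge 2$; for $k=1$ the claimed bound is only $\Omega(\log m)$ and should follow from \Cref{lem:rank} applied to the whole matrix.

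\textbf{Step 1: row normalization.} Taking $z=0$ in \eqref{eq:pineapple} gives $0 = (M0)_i \le t_i$, so $t_i \ge 0$. Taking $z=e_i$ gives $M_{ii} > t_i \ge 0$. Dividing row $i$ of $M$ and $t_i$ by $M_{ii}>0$ preserves the rank and every inequality in \eqref{eq:pineapple}. So I may assume $M_{ii}=1$ and $t_i\in[0,1)$, with $\rank(M)\le d$. The hypothesis then gives two families of constraints:
\begin{itemize}
\item[(a)] for $T\subseteq[m]\setminus\{i\}$ with $|T|\le k$, taking $z=\sum_{j\in T}e_j$ gives $\sum_{j\in T}M_{ij}\le t_i<1$;
\item[(b)] for $|T|\le k-1$, taking $z=e_i+\sum_{j\in T}e_j$ gives $1+\sum_{j\in T}M_{ij}>t_i\ge 0$, i.e.\ $\sum_{j\in T}M_{ij}>-1$.
\end{itemize}
Set $\eta=\frac{1}{k-1}$. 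If row $i$ has $2k$ off-diagonal entries with $|M_{ij}|>\eta$, then by pigeonhole at least $k$ of them share a sign. If $k$ are positive, their sum exceeds $k\eta>1$, contradicting (a). If $k$ are negative, any $k-1$ of them sum to less than $-(k-1)\eta=-1$, contradicting (b). So every row of the normalized $M$ has fewer than $2k$ off-diagonal entries of absolute value above $\eta$.

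\textbf{Step 2: submatrices and Tur\'an.} This is verbatim the argument in \Cref{sec:lower-bound}, with $\epsilon/k$ replaced by $\eta$ and $\gamma=1$. Let $G$ be the graph on $[m]$ with an edge $\{i,j\}$ whenever $|M_{ij}|>\eta$ or $|M_{ji}|>\eta$, and let $r=\frac{m}{4k+1}$.
\begin{itemize}
\item Every principal submatrix $M_R$ with $|R|=r$ has unit diagonal and rank at most $d$.
\item If $d<\alpha\,\frac{\log r}{\eta^2\log(1/\eta)}$, \Cref{lem:rank} forces an edge inside every such $R$. Hence $\overline G$ has no $r$-clique.
\item Tur\'an's theorem then gives $G$ at least $\frac{m^2}{2r}-\frac m2$ edges, so some row has at least $2k$ large off-diagonal entries. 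This contradicts Step 1.
\end{itemize}
Therefore $d\ge \alpha\,\frac{(k-1)^2\log\frac{m}{4k+1}}{\log(k-1)}=\Omega\!\left(\frac{k^2}{\log k}\log\frac mk\right)$.

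\textbf{Main obstacle: the range condition in \Cref{lem:alon}.} We need $\frac{1}{\sqrt r}<\eta<\frac12$. The upper inequality holds for $k\ge 4$; small $k$ are absorbed into constants. The lower inequality needs $(k-1)^2<\frac{m}{4k+1}$, roughly $k^3\lesssim m$, which is stronger than the stated hypothesis $k<\sqrt m$.

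My first attempt would be to choose $r$ more flexibly: any $r\le\frac{m}{4k+1}$ still yields a row with $2k$ large entries, so I would take $r$ as large as allowed and check whether $k<\sqrt m$ suffices after adjusting constants. For the leftover regime $m^{1/3}\lesssim k<\sqrt m$, where $\log\frac mk=\Theta(\log k)$ and the target is $\Omega(k^2)$, I would try a separate direct rank estimate on the $r\times r$ submatrices. The natural candidate is a trace/Frobenius bound of the form $\rank(D)\ge \frac{(\mathrm{tr} D)^2}{\norm{D}_F^2}$ after symmetrizing. If that does not close the gap, I would strengthen the hypothesis to $k^3\lesssim m$, matching the condition already used in \Cref{thm:lower}.
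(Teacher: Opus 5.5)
\paragraph{Verdict.} Your Steps 1--2 are the paper's proof: normalize the diagonal of $M=B^TA$, show that a row with $2k$ large off-diagonal entries is contradictory, and force such a row by applying \Cref{lem:alon} to all $r$-subsets and then Tur\'an's theorem with $r=\frac{m}{4k+1}$. Your proof has a genuine gap in the regime $m\lesssim 4k^3$, which you flagged but did not close; the paper's proof has the same gap.

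\paragraph{Where you improve on the paper.} Your Step 1 is more careful than the paper's in two ways. You justify $M_{ii}>0$ via $z=0$, which the paper's rescaling uses without comment. More importantly, the paper uses threshold $\frac1k$ and only reaches a contradiction when the $k$ same-sign entries are positive. In the threshold setting, negative interference with $z_i=0$ contradicts nothing, and your use of $z=e_i+\sum_{j\in T}e_j$ with $|T|=k-1$ and $\eta=\frac1{k-1}$ is the right repair. Your diagnosis of the range condition is also correct. The paper's ``proceed as in the proof of \Cref{thm:lower}'' needs the analogue of $\epsilon>k^{3/2}\sqrt5/\sqrt m$ with $\epsilon=1$, i.e.\ $m>5k^3$, not merely $k<\sqrt m$.

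\paragraph{Why your remedies fail.} Enlarging $r$ is impossible, since the Tur\'an/pigeonhole step caps $r$ at about $\frac{m}{4k+1}$. Moreover, \emph{any} rank bound on an $r\times r$ submatrix, Frobenius or otherwise, is at most $r=\frac{m}{4k+1}$. That is below $k^2$ once $m<k^2(4k+1)$, and near $k\approx\sqrt m$ it is only about $k/4$, whereas the target in that regime is of order $k^2$.

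\paragraph{What does work.} Apply the trace/Frobenius bound to the \emph{whole} normalized matrix, using your constraints (a) and (b) to control each row. The $j$-th largest positive off-diagonal entry $p_j$ of row $i$ is $<\frac1j$ for $j\le k$ and $<\frac1k$ beyond. The $j$-th largest negative entry has magnitude $<\frac1j$ for $j\le k-1$ and $<\frac1{k-1}$ beyond. Using $\sum_{j\le k}p_j^2\le p_1\sum_{j\le k}p_j<1$ and its negative analogue, each row has squared norm $<3+\frac{2m}{(k-1)^2}$. Since $\rank M\ge(\operatorname{tr}M)^2/\norm{M}_F^2$ for every square matrix, and $(k-1)^2<m$,
\[
d\ \ge\ \frac{m}{3+2m/(k-1)^2}\ \ge\ \frac{(k-1)^2}{5}.
\]
If $m<4k^3$, then $\log\frac mk<4\log k$, so this already gives $\Omega\bigl(\frac{k^2}{\log k}\log\frac mk\bigr)$. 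If $m\ge 4k^3$ and $k\ge 4$, your Alon--Tur\'an argument applies, since $(4k+1)(k-1)^2<4k^3$.

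\paragraph{Two smaller corrections.} First, your $k=1$ remark is false. For $k=1$ constraint (b) is vacuous. Take $m$ distinct unit vectors in $\RR^2$ with $B=A$ and $t_i=\max(0,\max_{j\ne i}\langle a_i,a_j\rangle)$; these satisfy the hypothesis with $d=2$ for every $m$, consistent with the $\log k$ in the denominator. Second, for $k\in\{2,3\}$ you have $\eta\ge\frac12$, outside \Cref{lem:alon}'s range, yet the claimed bound still grows like $\log m$. So ``absorbed into constants'' does not suffice and you need a separate argument. For example, discard the $O(1)$ large entries per row by passing to an independent set of size $\Omega(m)$. Then subtract a rank-one matrix that centers each row's off-diagonal range, and apply \Cref{lem:alon}.
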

It's worth noting here that taking $\epsilon < \frac{1}{2}$ in \Cref{thm:upper} provides a nearly-matching upper bound of $\Omega(k^2\log m)$, since $\epsilon < \frac{1}{2}$ implies that $(B^TAz)_i < \frac{1}{2}$ when $z_i = 0$ and $(B^TAz)_i > \frac{1}{2}$ when $z_i = 1$. Similarly, \Cref{thm:threshold} implies \Cref{thm:lower} in the case $\epsilon < \frac{1}{2}$. However, the lower bound in \Cref{thm:lower} does not directly imply \Cref{thm:threshold} since we do not require that $\norm{B^TAz - z}_\infty$ is small here, but rather that $Az$ linearly separates positive and negative examples for each feature. That being said, the proof is very similar to that of \Cref{thm:lower}, relying on the same overall strategy.

\begin{proof}
    Suppose that there existed $A,B\in \RR^{d\times m}$ and $t\in \RR^m$ such that for all $i\in [m]$ and $k$-sparse $z\in \{0,1\}^m,$ $(B^TAz)_i > t_i$ if and only if $z_i = 1.$ Then notice that we could scale the $i$-th column of $B$ and $\gamma_i$ by any constant positive factor to get another solution. In particular, we can obtain a solution where $B^TA$ is $1$ on the diagonal. Therefore, it suffices to show that for $d \le O(\frac{k^2}{\log k}\log \frac{m}{k}),$ there does not exist $B,A,t$ satisfying \eqref{eq:pineapple} and such that $(B^TA)_{ii} = 1$ for all $i\in [m].$

    Then we have that $t_i \le 1,$ since $(B^TAe_i) > t_i$ Then, along the lines of the proof of \Cref{thm:lower}, it suffices to find $i\in [m]$ and $T^*\subseteq [m]\setminus \{i\}$ with $|T^*|=2k$ and
    \begin{equation}
        |\langle b_i, a_j \rangle| > \frac{1}{k}
    \end{equation}
    for all $j\in T^*$. Indeed, this would imply the existence of a pair $i\in [m], T\subseteq [m]\setminus\{i\}$ (with $|T|=k$), such that
    \begin{equation}
        (B^TAz)_i > k\cdot \frac{1}{k} = 1 \ge t_i,
    \end{equation}
    giving a contradiction.

    We will again apply \Cref{lem:alon} to submatrices of $C$ of size $r$. Indeed, \Cref{lem:alon} implies that there exists a constant $\alpha > 0$ such that if $|R|=r$ and
    \begin{equation}
        \rank(C_R) < \alpha \frac{\log r}{(\frac{1}{k})^2\log k} = \alpha \frac{k^2}{\log k}\log r,
    \end{equation}
    then $C_R$ has an off-diagonal entry with absolute value at least $\frac{1}{k}.$ Then we can proceed as in the proof of \Cref{thm:lower}, taking $r = \frac{m}{4k+1}$ and applying Tur\'an's theorem.
\end{proof}

The following corollary makes the simple observation that if a feature cannot be linearly separated, then it also cannot be separated by adding a bias and monotonic activation function.

\begin{corollary}\label{cor:activation-bias}
    Suppose that there exist $A,W \in \RR^{d\times m}$, $b\in \RR^m$, and a monotonically increasing activation function $\sigma:\RR\rightarrow \RR$ such that for all $i\in [m]$, $k$-sparse $z\in \{0,1\}^m$, and $x=Az,$
    \begin{equation}\label{eq:coconut}
        \sigma(W^Tx + b) > 0
    \end{equation}
    if and only if $z_i=1.$ (Here, $\sigma$ is applied component-wise.) Then for $k < \sqrt{m}$,
    \begin{equation}
        d = \Omega\left(\frac{k^2}{\log k}\log\frac{m}{k}\right).
    \end{equation}
\end{corollary}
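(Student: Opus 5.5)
The plan is to reduce \Cref{cor:activation-bias} directly to \Cref{thm:threshold} by absorbing the bias and the activation into a threshold. Fix $i\in[m]$; the condition \eqref{eq:coconut} is read componentwise, so for every $k$-sparse $z\in\{0,1\}^m$ we have $\sigma((W^TAz)_i + b_i) > 0$ if and only if $z_i = 1$. Since $\sigma$ is monotonically increasing, the set $U=\{u\in\RR : \sigma(u) > 0\}$ is an up-set: if $u\in U$ and $u'\ge u$ then $u'\in U$. So $U$ is either empty, all of $\RR$, or an interval of the form $(s,\infty)$ or $[s,\infty)$ for some $s\in\RR$.

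Next we rule out the degenerate cases and turn the closed-interval case into a strict threshold. Testing $z=e_i$ and $z=0$ shows that $U$ contains some value and omits another, so $U$ is neither empty nor all of $\RR$. Let $s=\inf U$, and set $P_i=\{(W^TAz)_i : z \text{ } k\text{-sparse}, z_i=1\}$ and $N_i=\{(W^TAz)_i : z \text{ } k\text{-sparse}, z_i=0\}$. Both are finite sets because $\{0,1\}^m$ is finite. Every $p\in P_i$ satisfies $p+b_i\in U$, so $p+b_i\ge s$. Every $q\in N_i$ satisfies $q+b_i\notin U$, so $q+b_i\le s$, and moreover $q+b_i<p+b_i$ for all such $p,q$ by the up-set property. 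Hence $\max N_i < \min P_i$, and we may choose $t_i$ strictly between them, for instance $t_i=\tfrac12(\max N_i+\min P_i)$. If $N_i$ is empty, which cannot happen since $0\in$ the domain, no choice would be needed anyway.

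With this $t$ and $B=W$, we now have $(B^TAz)_i > t_i$ if and only if $z_i=1$, for all $i$ and all $k$-sparse $z\in\{0,1\}^m$. \Cref{thm:threshold} then applies verbatim for $k<\sqrt m$ and gives $d=\Omega\!\left(\frac{k^2}{\log k}\log\frac{m}{k}\right)$.

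The only delicate point is the boundary behaviour of $\sigma$. Monotonicity alone does not make $U$ open, and \Cref{thm:threshold} needs a strict inequality. Choosing $t_i$ from the finite sets $P_i,N_i$, rather than from $s$ itself, avoids this issue. The remaining steps are immediate.
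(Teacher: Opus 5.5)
Your proof is correct and is essentially the paper's argument. The paper also uses monotonicity of $\sigma$ to get $(W^TAz)_i > (W^TAz')_i$ whenever $z_i=1$ and $z'_i=0$, which is the same fact as your up-set property of $\{u:\sigma(u)>0\}$; it then extracts a threshold $t_i$ and invokes \Cref{thm:threshold} with $B=W$. Your choice of $t_i$ strictly between $\max N_i$ and $\min P_i$ over the finite sets of $k$-sparse inputs spells out the step the paper leaves implicit when it asserts that such a $t_i$ exists.
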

\begin{proof}
    Consider any $i\in [m]$ and $z\in \{0,1\}^m$ such that $z_i = 1$ and $z'$ such that $z'_i = 0.$ Then $\sigma(W^TAz + b) > \sigma(W^TAz' + b).$ This implies that $(W^TAz)_i > (W^TAz')_i.$ Therefore, there exists a threshold $t_i$ such that for all $z\in \{0,1\}^m$, $(W^TAz)_i > t_i$ if and only if $z_i = 1$. The result follows by applying \Cref{thm:threshold}.
\end{proof}

The proof establishes that probes of the form $g_i(x) = \ReLU(W^Tx + b)$ implies the existence of a linear probe. In particular, this further establishes that for $\epsilon < \frac{1}{2}$, the bound in \Cref{thm:lower} cannot be lowered asymptotically by expanding the class of probes to allow a bias and activation function.

\section{Discussion}

The linear representation hypothesis is a common intuition arising from the empirical study of language models. However, in part due to a lack of formalization and definitional clarity, it has played less of a role in the theoretical study of language models. In this direction, we introduced a mathematical framework in which we can cleanly state two distinct pieces of the linear representation hypothesis. First, that in intermediate layers of language models, features are \textit{represented} linearly. Second, that these features can then be \textit{accessed} linearly. We can then formally establish the number of features $d$ neurons can linearly represent such that they can be accessed linearly. We show a quantitative gap between what can be achieved with linear and non-linear decoding. Importantly, even when assuming both linear representation and linear accessibility (moving us beyond the classical compressed sensing setting), the LRH allows for an exponential number of features to be stored relative to the number of neurons. This suggests that, while conceptually simple, the LRH may be expressive enough to explain rich and complex behavior in language models.

Our work suggests a number of interesting further directions. First, while we assume linear representation and then establish results for non-linear and linear accessibility, it is interesting to consider when linear accessibility is possible under non-linear representation. Indeed, this would help establish if previous layers can arrange features in more subtle ways while still allowing efficient access by later layers. Similarly, analysis of more sophisticated decoders (like two-layer neural networks) would be interesting. More generally, other works have proposed alternate representation hypotheses \citep{engels2024not, csordas2024recurrent, kantamneni2025language}. Formalizing these hypotheses and understanding their representational capacity could establish further insight into the plausibility of these hypotheses in comparison to the LRH. Furthermore, formalization of how multiple layers can leverage linear representations would be interesting. In particular, while we show that an exponential number of features can be represented and accessed linearly, the next layer of a neural network (say with $d_2$ neurons) consists only of $d_2$ linear probes, suggesting that each additional layer does more than extracting individual features. While the present work has established how a single layer can linearly represent many features in a way that a subsequent layer can easily retrieve, there remains much to be understood in terms of how this capability can enable multiple layers to work in tandem to achieve the remarkable phenomena we have observed.

Overall, our work lays out mathematical foundations for the linear representation hypothesis as a basis for the theoretical study of language models.

\paragraph{Acknowledgments.} We thank Sophie Greenwood, Rajiv Movva, Kevin Ren, and Divya Shanmugam for helpful discussion and feedback.

{
\bibliographystyle{alpha}
\bibliography{bib}
}

\end{document}